\theoremstyle{plain}
\newtheorem{theorem}{Theorem}[section]
\newtheorem{lemma}[theorem]{Lemma}
\theoremstyle{definition}
\newtheorem{assumption}[theorem]{Assumption}
\theoremstyle{remark}
\newtheorem{principle}[theorem]{Principle}
\newcommand{\norm}[1]{\left\|#1\right\|}
\def\eqref#1{equation~\ref{#1}}
\def\1{\bm{1}}
\def\bnu{\boldsymbol{\nu}}
\DeclareMathAlphabet{\mathsfit}{\encodingdefault}{\sfdefault}{m}{sl}
\SetMathAlphabet{\mathsfit}{bold}{\encodingdefault}{\sfdefault}{bx}{n}
\def\gF{{\mathcal{F}}}
\def\bw{\boldsymbol{w}}
\def\bg{\boldsymbol{g}}
\def\bG{\boldsymbol{G}}
\def\bom{\boldsymbol{m}}
\newcommand{\E}{\mathbb{E}}
\newcommand{\Var}{\mathrm{Var}}
\newcommand{\btwo}{\beta_2}
\newcommand{\bu}{\boldsymbol{u}}
\def\bone{\beta_1}
\def\btwo{\beta_2}
\newcommand{\Pro}{\mathbb{P}}
\definecolor{myfavblue}{rgb}{0.05, 0.2, 0.8}
\definecolor{keywords}{RGB}{255,0,90}
\definecolor{comments}{RGB}{0,0,113}
\definecolor{red}{RGB}{160,0,0}
\definecolor{green}{RGB}{0,150,0}
\definecolor{C0}{rgb}{0.12156862745098039, 0.4666666666666667, 0.7058823529411765}  
\definecolor{mydarkblue}{rgb}{0,0.08,0.45}
\newcommand*{\nameA}[1]{{\emph{AdamS}}}
\newcommand*{\fullname}[1]{{\emph{Adam with momentum as a Self normalizer}}}
\newcommand*{\nameS}[1]{{\emph{MES}}}
\title{\nameA{}: Momentum Itself Can Be A Normalizer \\for LLM Pretraining and Post-training\thanks{The first two authors contribute equally. \textbf{Correspondence to:} \emph{zhanghuishuai@pku.edu.cn, bhwangfy@gmail.com, cpchenpi@mail.ustc.edu.cn}.
 }}
\author{
 \textbf{Huishuai Zhang\textsuperscript{1,3$\dagger$}},
 \textbf{Bohan Wang\textsuperscript{2$\dagger$}},
 \textbf{Luoxin Chen\textsuperscript{2}},
 \\
\\
 \textsuperscript{1}Wangxuan Institute of Computer Technology, Peking University\\
 \textsuperscript{2} University of Science and Technology of China\\
 \textsuperscript{3}State Key Laboratory of General Artificial Intelligence
}
\begin{document}
\maketitle

\begin{abstract}

We introduce \nameA{}, a simple yet effective alternative to Adam for large language model (LLM) pretraining and post-training. By leveraging a novel denominator, i.e., the root of weighted sum of squares of the momentum and the current gradient, \nameA{} eliminates the need for second-moment estimates. Hence, \nameA{} is efficient, matching the memory and compute footprint of SGD with momentum while delivering superior optimization performance. 
Moreover, \nameA{} is easy to adopt: it can directly inherit hyperparameters of AdamW, and is entirely model-agnostic, integrating seamlessly into existing pipelines without modifications to optimizer APIs or architectures. The motivation behind \nameA{} stems from the observed $(L_0, L_1)$ smoothness properties in transformer objectives, where local smoothness is governed by gradient magnitudes that can  be further approximated by momentum magnitudes. We establish rigorous theoretical convergence guarantees and provide practical guidelines for hyperparameter selection. 
Empirically, \nameA{} demonstrates strong performance in various tasks, including pre-training runs on GPT-2 and Llama2 (up to 13B parameters) and  reinforcement learning in post-training regimes. With its efficiency, simplicity, and theoretical grounding, \nameA{} stands as a compelling alternative to existing optimizers. The code is available at https://github.com/pku-huzhang/AdamS.
\end{abstract}

\begin{figure}[htb]
\begin{center}
\begin{minipage}[t]{0.49\linewidth}
\centering
{\includegraphics[width=\linewidth]{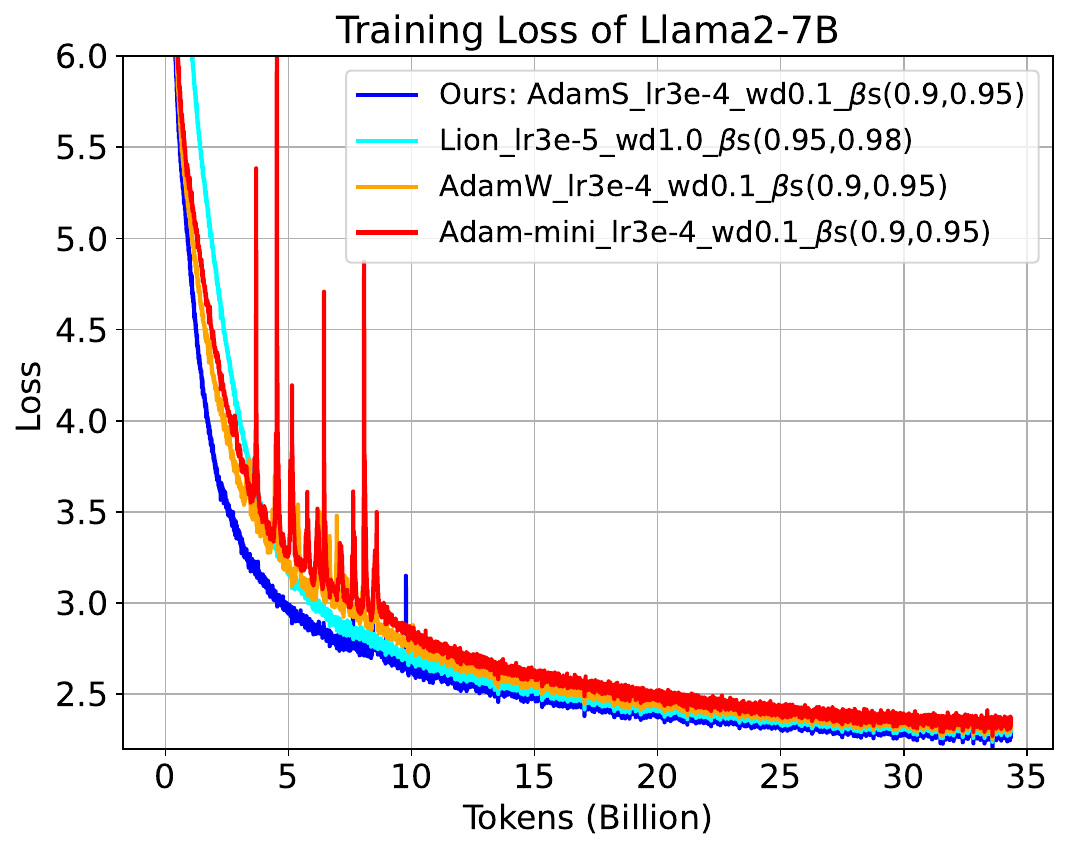}} 
\end{minipage}
\hfill
\begin{minipage}[t]{0.49\linewidth}
\centering
{\includegraphics[width=\linewidth]{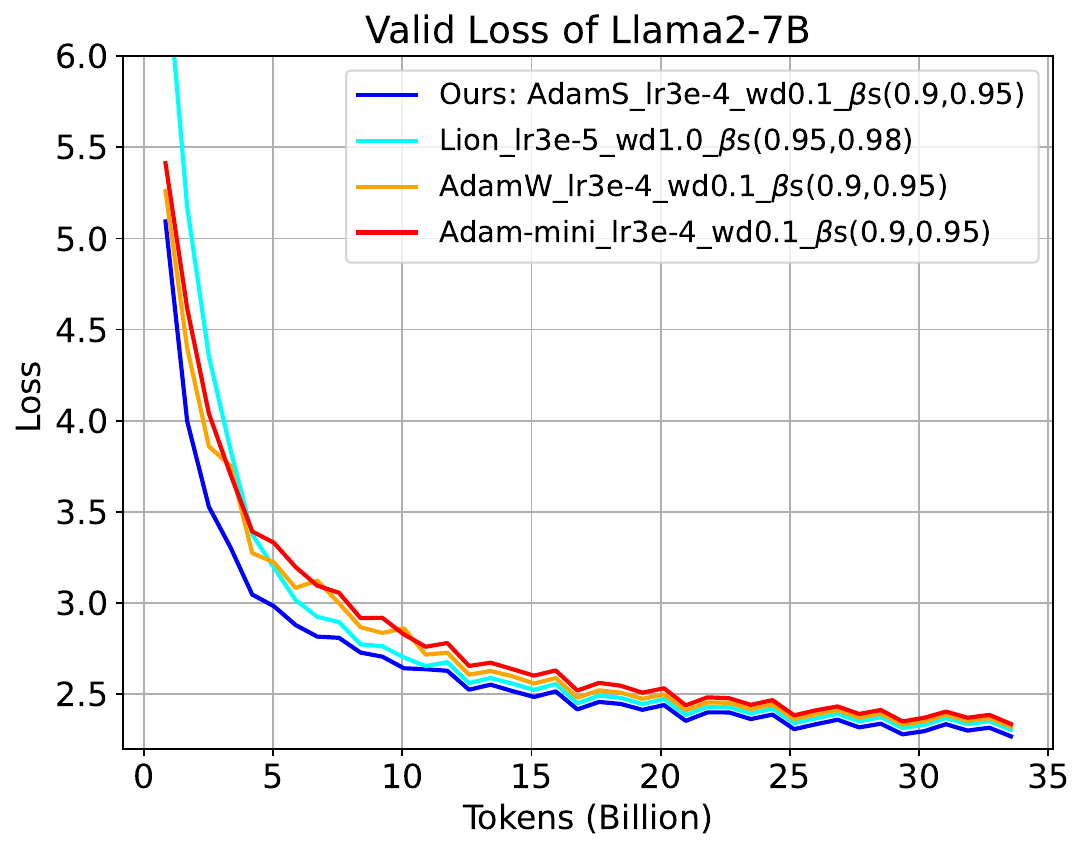}} 
\end{minipage}

\end{center}
\caption{
Training and validation loss curves for pretraining LLaMA 2–7B models. The proposed \nameA{} achieves convergence comparable to or better than baseline methods under the same hyperparameter settings as LLaMA 2~\cite{touvron2023llama2}, while eliminating the need to store AdamW’s second-moment estimates.
}
\label{fig:llama}
\end{figure}

\section{Introduction}

Due to the scaling law \citep{kaplan2020scaling} of neural networks, it has been enthusiastic in the AI community to pre-train large foundation models with enormous data over the past years \citep{touvron2023llama,brown2020gpt3,zhang2022opt,rae2021scaling,chowdhery2022palm,du2021glm,liu2024deepseek,dubey2024llama,yang2024qwen2}. 
Training such large foundation models become super challenging because of tremendous engineering efforts,  computational cost \citep{deepspeed,guo2025deepseek}, and potential training spikes~\citep{zhang2022opt,molybog2023theory,chowdhery2022palm}. 

One reason for such high cost comes from the widely used optimizer \emph{Adam}~\citep{kingma2014adam} or \emph{AdamW}~\citep{loshchilov2019adamw}: the optimizers require storing both the state of momentum and the state of second-moment estimates, which  consumes $2\sim 4$ times GPU memories of the model size,  huge for models with hundreds of billions of parameters. In practice, practitioners employ advanced distributed‐training frameworks, such as Fully Sharded Data Parallel (FSDP)~\citep{zhao2023pytorch} and  DeepSpeed’s ZeRO optimizer~\citep{deepspeed}, to shard optimizer state across multiple GPUs and exchange only the necessary parameters over high‐bandwidth interconnects, thereby compensating memory consumption by communication.

In this paper, we try to reduce such memory cost  by proposing a simple yet effective  optimizer \nameA{}, an alternative to AdamW.  \nameA{} eliminates the need for second-moment estimates,  by leveraging a novel denominator: the root of weighted sums of  squares of the momentum and the current gradient. As a consequence, \nameA{} matches the memory and compute footprint of stochastic gradient descent (SGD) with momentum while delivering superior performance as good as AdamW. 

The design of \nameA{} is inspired by the observation that transformer-based models, which dominate modern large language models (LLMs), exhibit unique smoothness properties in their optimization landscapes. Specifically, the local smoothness of these objectives is governed by gradient magnitudes, which suggests that the learning rate should be proportional to the reciprocal of the gradient norm at each iteration, explaining why Adam optimizer beats SGD on training transformer-like architectures \citep{zhang2019gradient,wang2023convergence}. We further employ the fact that momentum, an exponential average of historical gradients, can provide a good and robust estimate of gradient magnitude \citep{cutkosky2020momentum,zhang2020improved} without the need for complex second-moment computations. By leveraging this insight, \nameA{} reduces memory cost of the optimizer states by half. Such efficiency of \nameA{} is particularly attractive for large-scale training, where even small improvements in efficiency can translate into significant cost savings. 


Recognizing this deep-rooted dependency on AdamW, we emphasize that \nameA{} is easy to adopt and can serve as a drop-in replacement for AdamW for pre- and post-training tasks of LLM. Moreover, \nameA{} is  model-agnostic, making it easy to integrate into existing pipelines without modifications to  APIs or model architectures. More importantly, it inherits AdamW’s hyperparameter configuration, thereby mitigating the often prohibitive costs of hyperparameter re-tuning and minimizing the risk associated with deploying a new optimizer at scale.

Empirically, \nameA{} demonstrates strong performance across a wide range of tasks and architectures, namely the transformer-based next-token prediction pretraining tasks and GRPO reinforcement learning tasks. In pretraining scenarios, it matches or exceeds the performance of AdamW on models ranging from GPT-2 to Llama2, with parameter counts up to 13B as shown in Figure~\ref{fig:llama}. This scalability is particularly important given the growing trend toward even larger models and datasets. Additionally, \nameA{} excels in post-training tasks, including reinforcement learning (RL), where it achieves state-of-the-art results in tasks such as the DeepSeek R1-Zero replication. This versatility underscores its potential as a general-purpose optimizer for both pretraining and post-training paradigms.

On the theoretical side, we establish rigorous convergence guarantees that demonstrate the effectiveness of \nameA{} in optimizing non-convex objectives, which are typical in LLM training. These guarantees are derived under realistic assumptions about the smoothness and noise properties of the optimization landscape.

Our contributions can be summarized as follows:
\begin{itemize}
    \item \textbf{Innovative Optimizer Design}: We introduce \nameA{}, which eliminates the need for second-moment estimates by leveraging a novel normalization strategy based on a weighted momentum-gradient combination. This approach  reduces the memory footprint of optimizers' state by  50\% while maintaining the ease of adoption.
    
    \item \textbf{Theoretical Grounding}: We rigorously analyze the convergence guarantees of \nameA{} for optimizing non-convex objectives under relaxed smoothness and weak noise assumptions, which matches the lower bounds of any gradient-based  optimizers.

    \item \textbf{Empirical Validation}: Through extensive experiments, e.g., large-scale pretraining on models like GPT-2 and Llama2 (up to 13B parameters) and reinforcement learning post-training tasks such as DeepSeek R1-Zero replication, we demonstrate that \nameA{} consistently matches AdamW, underscoring its versatility across different training paradigms.
\end{itemize}

In the following sections, we detail the motivation and formulation of \nameA{}. We then present the theoretical analysis and convergence guarantees, followed by an extensive empirical study spanning a variety of tasks and architectures. Through this comprehensive exploration, we aim to establish \nameA{} as a compelling alternative in the evolving landscape of large language model pretraining and post-training optimization.

\subsection{Related Works}

\textbf{The smoothness property of transformer-like architectures.}  The seminal work \citep{zhang2019gradient}  introduced the $(L_0,L_1)$-smooth condition that assumes local smoothness bounded by the local gradient norm, which is nicely verified by the optimization landscape of training transformer-like models.
Under these assumptions, convergence properties of adaptive optimizers, AdaGrad \citep{faw2023beyond,wang2023convergence}, Adam \citep{wang2022provable,he2023convergence,wang2023convergence,li2023convergence} are established and the benefit over SGD is demonstrated.  Our design of \nameA{} is inspired by these local smoothness properties, and delivers robust empirical performance, where gradient magnitudes govern optimization dynamics particularly in transformer-like architectures. 

\textbf{Memory-efficient adaptive learning rate optimizers.} In the development of memory-efficient adaptive learning rate optimizers, several notable methods have been proposed to address the challenges of high memory consumption in large-scale neural network training. \citet{shazeer2018adafactor} introduced Adafactor,  which reduces memory usage by maintaining only per-row and per-column sums of the second-moment estimates for weight matrices. 
\citet{anil2019memory} proposed SM3, a memory-efficient adaptive optimization method that approximates second-moment statistics with sublinear memory cost by partitioning parameters and sharing second-moment estimates among them. SM3 achieves per-parameter adaptivity with reduced memory overhead, facilitating the training of larger models and mini-batches. \citet{luo2023came} developed CAME to address the instability issues of existing memory-efficient optimizers via a confidence-guided adaptive strategy.  \citet{lv2023adalomo} introduced AdaLomo, which combines low-memory optimization techniques with adaptive learning rates by employing non-negative matrix factorization for second-order moment estimation. \citet{zhao2024galore} proposed GaLore that projects weight gradients onto a low-rank subspace, and update the model in the low-rank subspace, enabling fine-tuning LLM with consumer-grade GPUs with 24GB memory, where the idea of low-rank projection has been initiated in \citep{yu2021large}. Recently, \citet{zhang2024adam} proposed Adam-mini, an optimizer that reduces memory usage by partitioning model parameters into blocks based on the Hessian structure and assigning a single learning rate to each block, reducing memory consumption of optimizer state by approximately 45\% to 50\%.

Despite the proliferation of all these advancements, 
practitioners often hesitate to move away from AdamW 
because they either need to tune more hyperparameters, or require to be aware of the model architecture, or do not systematically surpassing AdamW in large-scale learning \citep{kaddour2023no, hoffmann2022training}. In contrast, \nameA{}  offers a model-agnostic solution that seamlessly integrates into existing workflows.  It requires no additional hyperparameters beyond those used in AdamW, allowing for straightforward adoption and tuning. Moreover, \nameA{} matches the memory efficiency of vanilla SGD with momentum while delivering performance comparable to AdamW, making it a practical drop-in replacement that one can enjoy benefits with minimal effort.

Adam-mini indeed targets memory efficiency, but it requires architectural awareness (e.g., grouping parameters), whereas AdamS applies in a model-agnostic way, without model-specific modifications. Adam-mini also maintains a second-moment approximation, albeit coarsely, while AdamS eliminates it entirely.

The main claim of Adam-mini paper is that Adam-mini can mimic the performance of AdamW with memory saving of the second moments. Hence it is sufficient to compare AdamS with AdamW given the performance of Adam-mini is fully captured by AdamW.

\section{Motivation and Design Choices of \nameA{}}

This section outlines the motivation behind our optimizer design—specifically, the rationale for adopting the root mean square of a properly weighted momentum itself and the current gradient as an adaptive denominator. We then formalize the algorithm and analyze its properties.

\subsection{Motivation and  $(L_0,L_1)$ smoothness}

In classical optimization settings, gradient descent provably decreases the loss at each iteration—provided the learning rate is smaller than the inverse of the smoothness constant. However, this principle fails to hold for transformer-based models, where stochastic gradient descent (SGD) with momentum exhibits poor convergence empirically. Recent work \cite{zhang2019gradient} identifies a key observation: Transformer training objectives violate standard smoothness assumptions and instead obey a relaxed \((L_0, L_1)\)-smoothness condition. Under this regime, the local smoothness depends on the gradient magnitude, enabling pathological curvature that can arbitrarily slow SGD’s progress \cite{wang2023closing}.  The \((L_0, L_1)\)-smoothness assumption  is as follows.

\begin{assumption}[$(L_0,L_1)$-smooth condition]
\label{assum: objective}
Assuming that $f$ is differentiable and lower bounded, there exist  constants $L_0,L_1>0$, such that $\forall \bw_1, \bw_2 \in \mathbb{R}^d$ satisfying $\Vert \bw_1 -\bw_2 \Vert \le \frac{1}{L_1}$,
\begin{flalign*}
    &\Vert \nabla f(\bw_1) -\nabla f(\bw_2) \Vert \\
    \le &(L_0+L_1 \Vert \nabla f(\bw_1) \Vert)\Vert \bw_1 -\bw_2 \Vert.
\end{flalign*}
    
\end{assumption}

Assumption \ref{assum: objective} is a general form of $(L_0,L_1)$-smooth condition, equivalent to the Hessian-bound form \cite{zhang2019gradient} when Hessian exists. 

When Assumption~\ref{assum: objective} holds, the local smoothness of the objective function is bounded by the the linear form of the gradient norm (i.e., \( L(\bw) \leq L_0+L_1 \|\nabla f(\bw)\| \). We know that the \emph{smoothness constant \( L(\bw) \)} governs how much the gradient can change locally. If \( L(\bw) \) scales with \( \|\nabla f(\bw)\| \), the curvature (and thus the risk of overshooting) increases with the gradient's magnitude. This necessitates a smaller learning rate when the gradient is large and allows a larger rate when the gradient is small.

A brief derivation (see details in Appendix \ref{app:descent}) gives a range of $\eta_t$ that guarantees decreasing function value at each step, i.e., $\eta_t \le 1/(L_0+L_1 \|\nabla f(\bw_t)\|)$, 
which ensures convergence by balancing the descent and curvature terms. This adaptively scales \( \eta \) inversely with the grad's magnitude.

In practice, we do not know the exact values of  $L_0$ and $L_1$, a typical choice of $\eta_t$ should be 
\[
\eta_t = \frac{C}{ \|\nabla f(\bw_t)\|+\epsilon},
\]
for some constant or scheduled constant $C$ after taking account of avoiding explosion near minima. Such an argument can be extended to coordinate-wise sense, which necessitates  per-coordinate adaptive learning rates.

We note that Adam adapts learning rates using second-moment estimates, i.e., the exponential average of of the square of historical gradients to approximate the gradient magnitude.  We draw inspiration from \cite{zhang2020improved}, which demonstrates that momentum—the exponential moving average of historical gradients—can itself serve as a robust proxy for gradient magnitudes. Building on this insight, we propose replacing second-moment estimation with a novel denominator derived from a weighted combination of momentum and the current mini-batch gradient. This approach retains the benefits of adaptive learning rate tuning while eliminating the computational overhead of tracking second moment statistics.

\subsection{The Design of \nameA{}}

The design of \nameA{} is given by Algorithm \ref{alg:adams}. Specifically, the denominator is 
$$\bnu_{t}\leftarrow \beta_{2}\bom_{t-1}^{\odot 2}+ (1-\beta_{2})\bg_t^{\odot 2}.$$
\begin{algorithm}
    \caption{ {\color{orange} AdamW} v.s. {\color{blue}\nameA{}}}\label{alg:adams}
    \begin{algorithmic}[1]
    \STATE \textbf{Input:} momentum parameter $\beta_1$, denominator parameter $\beta_2$, weight decay $\lambda$, learning rate $\eta$, objective  $f$, regularizer $\epsilon$
    \STATE \textbf{Initialize:}  $\bw_0$,   $\bom_0\leftarrow 0,\bnu_0 \leftarrow 0, t\leftarrow 0$ 
     \WHILE{$\bw_t$ not converged}
        \STATE $t \leftarrow t + 1$
        \STATE $\bg_t \leftarrow \nabla_{\bw}{f(\bw_{t-1})}$
        \STATE \textbf{update state tracking}
        \STATE $\bom_t \leftarrow \beta_1 \bom_{t-1} + (1 - \beta_1)\bg_t$
        
          \STATE {\color{orange} AdamW: \;\;\;\;\;\; $\bnu_{t}\leftarrow \beta_{2}\bnu_{t-1}+ (1-\beta_{2})\bg_t^{\odot 2}$}
        
        \STATE {\color{blue} \nameA{}: \;\;\;\;\;\;\;\;$\bnu_{t}\leftarrow \beta_{2}\bom_{t-1}^{\odot 2}+ (1-\beta_{2})\bg_t^{\odot 2}$}
        \STATE \textbf{update model parameters}

        \STATE $\bw_{t}\leftarrow  (1-\eta_t\lambda)\bw_{t-1}-\eta_t \left(\frac{1}{\sqrt{\bnu_{t}}+\epsilon}  \odot  {\bom_t}\right)$
    \ENDWHILE
    
    \STATE \textbf{return} $\bw_t$
    \end{algorithmic}
    \label{alg:adamw}
\end{algorithm}

\subsection{The Properties of \nameA{}}
We next compare the behavior of \nameA{} and  that of AdamW. We note that it is very hard to analyze rigorously the update terms  for AdamW and \nameA{} because the correlations between the numerator and the denominator, also the correlations among historical gradients. The analysis here serves as a thought experiment with simplified assumptions (e.g., independence, distributional assumptions) to help illustrate conditions when the denominator of \nameA{} approximates that of AdamW. 

\textbf{Analytical comparison.}  The numerators of \nameA{} and AdamW are the same. To illustrate the behavior of denominators for a thought verification, we consider the following sequence $\{X_t\}$, where $X_t \sim \mathcal{N}(\mu, \sigma^2)$ are independent. Then the distribution of the exponentially weighted moving average (EMA) of their squared values 
\[
S_t = (1-\beta_2) X_t^2 + \beta_2 S_{t-1}, \quad t = 1,2,\dots,T.
\]
follows a weighted sum of noncentral chi-squared distributions. As $t$ becomes large, such a distribution tends to be a non-centered Gaussian distribution. We compute the mean and variance of $S_t$,
\begin{flalign*}
\mathbb{E}[S_t] &= (\mu^2+\sigma^2)(1-\beta_2^t),\\
\operatorname{Var}(S_t) &= \left(2\sigma^4+4\mu^2\sigma^2\right)
\frac{1-\beta_2}{1+\beta_2}\,(1-\beta_2^{2t}).
\end{flalign*}
Consequently, $ \mathbb{E}[S_\infty] = (\mu^2+\sigma^2)$, and $\operatorname{Var}(S_\infty) = (2\sigma^4 + 4\mu^2\sigma^2)(1-\beta_2)/(1+\beta_2)$.

On the other hand, the distribution of the exponential moving average of $X_t$, i.e. $M_t =  (1-\beta_1) X_t + \beta_1 M_{t-1}, \quad t = 1,2,\dots,$ follows a Gaussian distribution. 
 denominator of \nameA{} involves the following quantity, 
$V_t := \beta \,M_{t-1}^2 + (1-\beta)\,X_t^2$. 
Since \(X_t\) and \(M_{t-1}\) are independent, 
 \(V_t\) is the sum of two independent scaled noncentral chi–squared random variables with one degree of freedom. We have 

\begin{flalign*}
    \mathbb{E}[V_\infty] 
&=\mu^2+\sigma^2\left(1-\frac{2\beta\beta_1}{1+\beta_1}\right),\\
\operatorname{Var}(V_\infty)&=2\sigma^4\left[\beta^2\left(\frac{1-\beta_1}{1+\beta_1}\right)^2+(1-\beta)^2\right] \\
&+4\mu^2\sigma^2\left[\beta^2\frac{1-\beta_1}{1+\beta_1}+(1-\beta)^2\right].
\end{flalign*}

We note that if $\mu \gg \sigma$, which can be true when the gradient noise becomes considerably small as the batch size is extremely large. Alternatively, this suggest that the behavior of \nameA{} could be more resemble that of AdamW if the batch size get larger, fitting to practical setup in LLM pretraining.

By comparing $\mathbb{E}[S_\infty]$ and $\mathbb{E}[V_\infty]$, we note that if \(\mu \gg \sigma\), i.e., a regime achievable under  large batch sizes where gradient noise becomes negligible, \nameA{}’s behavior increasingly resembles that of AdamW. This alignment with AdamW’s dynamics under low-noise conditions mirrors practical LLM pretraining setups, where large batch sizes are standard.

Moreover, root operation is non-expansive. the denominators of \nameA{} and AdamW are quite close when $\mu \gg \sigma$, which could hold for very large batch size that is used in practice when training extremely large language models. We note that $\beta_2$ cannot be too close to 1.

For specific $\beta_1=0.9, \beta_2=0.95$, we have $\Var[S_\infty] \approx 0.0256(2\sigma^4+4\mu^2\sigma^2)$. The best $\beta=0.95$ to minimize the difference between the variance of $S_t$ and $V_t$, and $\Var[V_t] = 2\sigma^4*0.005 + 4\mu^2\sigma^2*0.05$.

\textbf{Empirical comparison between the update matrices of \nameA{} and AdamW.}  
We analyze the update matrices of AdamW and \nameA{} along the training trajectory of a GPT-2 Small model. The detailed experimental setup is provided in Section~\ref{subsec:gpt2-experiment}.  

To quantify the similarity between the updates, we compute the cosine similarity between the update matrices of \nameA{} and AdamW throughout the training process with AdamW. The results are presented in Figure~\ref{fig:cosine-similarity}. For comparison, we also include the cosine similarity between AdamW and the recently proposed Adam-mini \cite{zhang2024adam}. The results show that \nameA{} exhibits a strong alignment with AdamW, closely matching its update direction.

\begin{figure*}[tb]
\begin{center}
\begin{minipage}[t]{0.7\linewidth}
\centering
{\includegraphics[width=0.49\linewidth]{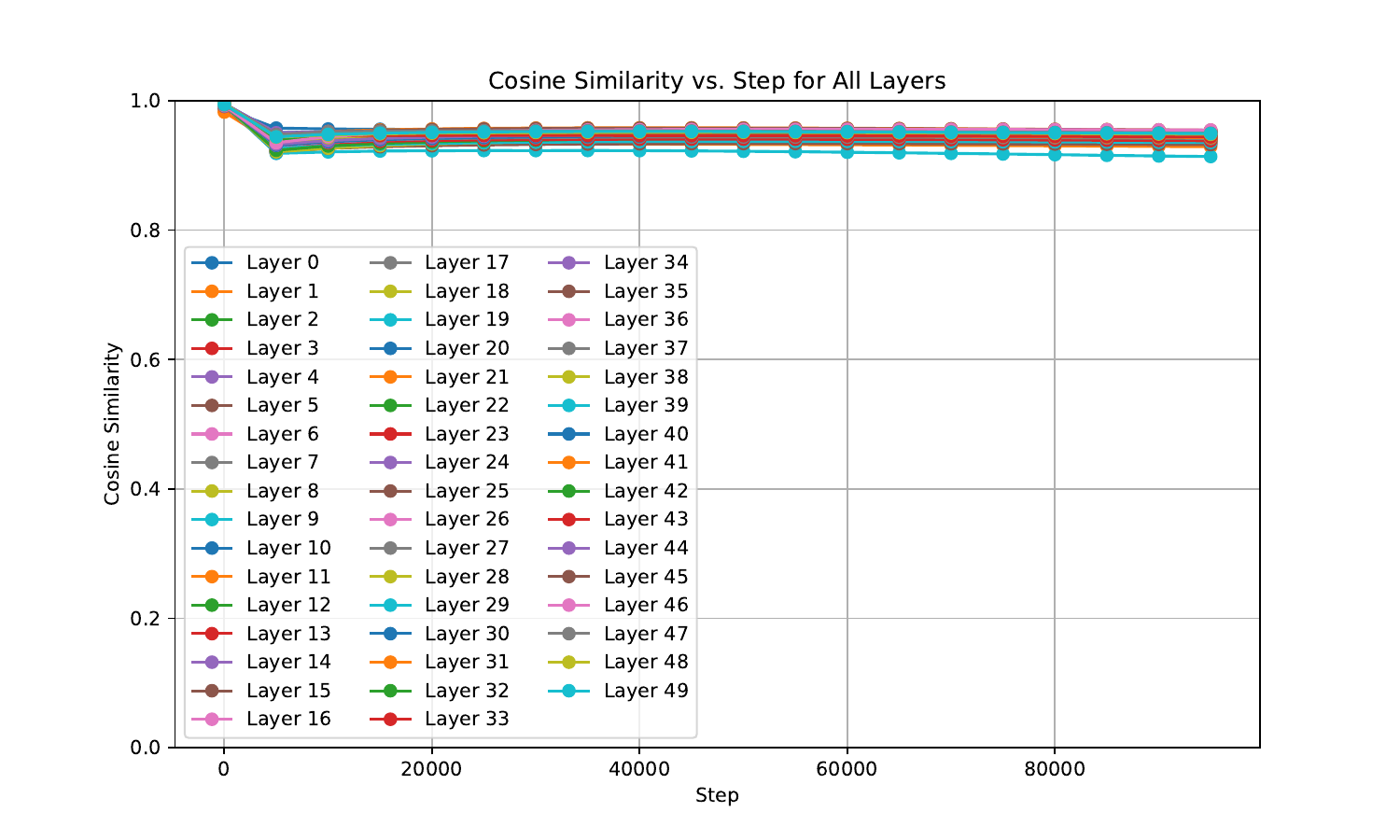}
\includegraphics[width=0.49\linewidth]{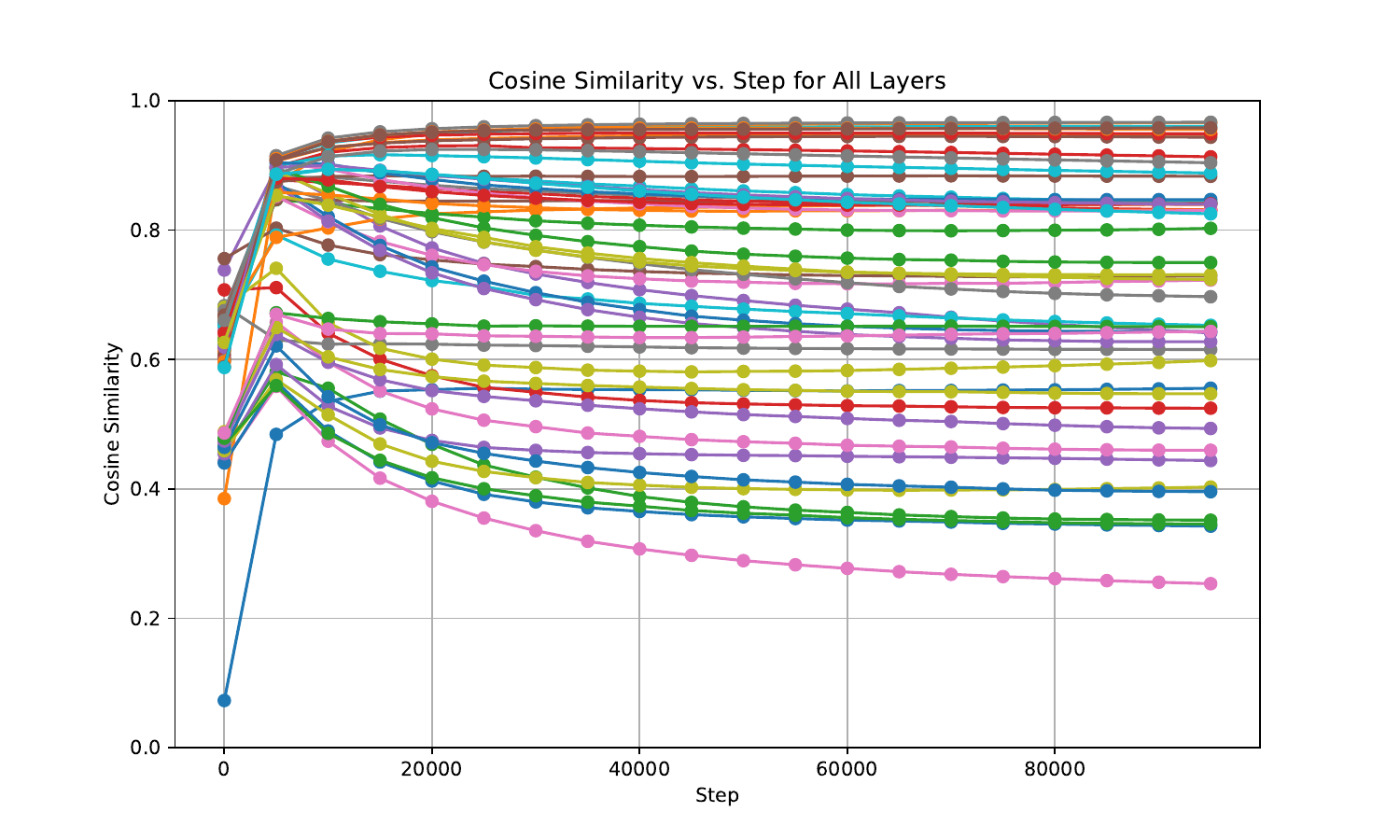}}
\caption{
The cosine similarities between the update matrices of \nameA{} and AdamW (\textbf{left}), Adam-mini and AdamW (\textbf{right}) for all layers of GPT2-Small model.  Across the training trajectory, the update direction of \nameA{}  closely aligns with that of AdamW.
}
\label{fig:cosine-similarity}
\end{minipage}
\hfill
\begin{minipage}[t]{0.27\linewidth}
\centering
{\includegraphics[width=\linewidth]{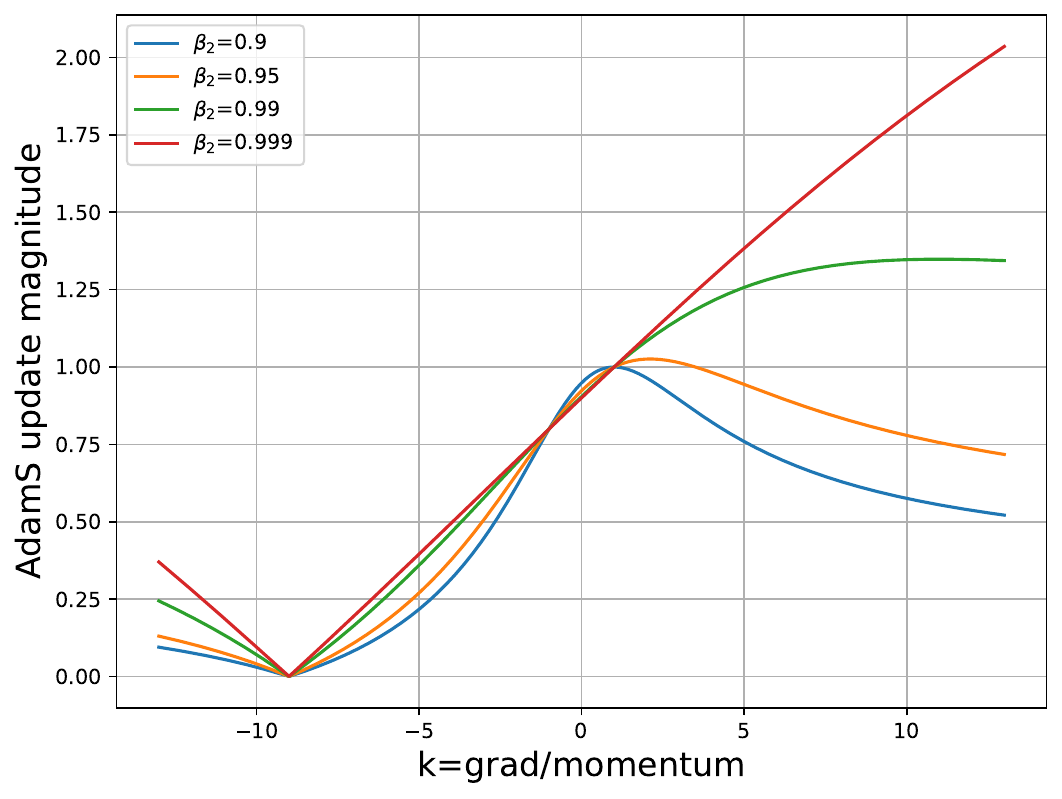}}
\caption{
The update magnitude of \nameA{} for grad/momentum varying with $\beta_1=0.9$ and $\beta_2=0.9,0.95,0.99,0.999$.
}
\label{fig:update-magnitude}
\end{minipage}

\end{center}
\end{figure*}

\textbf{The magnitude of \nameA{} update.} 
For \(\beta_1 = 0.9\), we plot the update magnitude of \nameA{} when the $gradient/momentum$ values span \([-13, 13]\), covering most values in practice,  in Figure~\ref{fig:update-magnitude}. We can see that overly large \(\beta_2\) values can destabilize updates by inflating the denominator’s sensitivity to outliers. To mitigate this, we recommend not setting \(\beta_2\) too large,  and a typical value of $\beta_2=0.95$ works well and aligns with empirical choice of AdamW for LLM pretraining.

\textbf{Memory cost and throughput of \nameA{}.} \nameA{} effectively reduces optimizer state memory usage by half. However, the extent of improvement in throughput and maximum batch size compared to the original AdamW depends on the model size and GPU type, as the primary bottleneck may be either memory or computation. Notably, as model size increases, the benefits of \nameA{} become more pronounced, aligning well with practical large language model (LLM) training scenarios. As shown in Table~\ref{tab:throughput}, \nameA{} can improve over AdamW in terms of throughput by almost 36\%, i.e., reducing the time 6.9s to 4.4s of passing a batch of tokens, for GPT2-XL pretraining.

\begin{table}[h]
    \centering
    \begin{tabular}{cccc}
        \hline
        Model & Optimizer & Max batch & Throughput\\
        \hline
        \multirow{2}{*}{774M} & AdamW & 10 & 2.0s\\
        & \nameA{} & 10 & 2.0s\\
        \hline
        \multirow{2}{*}{1.5B} & AdamW & 1 & 6.9s \\
        & \nameA{}& 3 & 4.4s \\
        \hline
    \end{tabular}
    \caption{Memory cost and throughput comparison between AdamW and \nameA{}. The maximum batch size (Max batch) is the largest allowable batch without Out of Memory and the throughput (Throughput) is measured by the time (in seconds) for one iteration of passing 480K tokens  with gradient accumulation. Experiment setup: 8 A100 GPUs with 40GB memory, training with GPT2-XL (1.5B) and GPT2-Large (774M).}
    \label{tab:throughput}

\end{table}

\section{Convergence of \nameA{}}

This section establishes the theoretical convergence of \nameA{}. We first introduce another key assumption on the gradient noises.

\begin{assumption}[Sub-gaussian noise]
\label{assum: noise}
We assume that the stochastic noise $\bg_t$ is unbiased, i.e.,  $\mathbb{E}^{|\mathcal{F}_t} \bg_t=\bG_t$. We further assume $\bg_t$ is centered with sub-gaussian norm, i.e., there exists some positive constant $R $, such that  $\mathbb{P}^{|\gF_t} (\Vert\bg_t - \nabla f(\bw_t) \Vert \ge s) \le  2e^{-\frac{s^2}{2R^2}}$.
\end{assumption}

Assumption \ref{assum: noise} is one of the weakest assumptions on the noise in existing literature, and generalizes bounded gradient assumption \cite{defossezsimple} and bounded noise assumption \cite{li2023convergence}. Based on Assumption \ref{assum: objective} and \ref{assum: noise}

\begin{theorem}
\label{thm: parameter_agnostic}
     Let Assumptions \ref{assum: objective} and \ref{assum: noise} hold. Then, setting $\eta_t = \tilde{\mathcal{O}}(\frac{1}{\sqrt{T}})$, $\bone = 1- \tilde{\Theta}(\frac{1}{\sqrt{T}})$, and $\btwo = 1 - \tilde{\Theta}(\frac{1}{T})$ with $\frac{1-\bone}{\eta} \ge C$, where $C$ is some constant defined in Eq. (\ref{eq: def C}) , we have that \nameA{} in Algorithm \ref{alg:adams} satisfies
    \begin{equation*}
        \E \min_{t\in [1,T]}\Vert \nabla f(\bw_t) \Vert \le \tilde{\mathcal{O}} \left(\frac{1}{\sqrt[4]{T}}\right).
    \end{equation*}
\end{theorem}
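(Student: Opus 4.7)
The plan is to combine a descent lemma tailored to Assumption~\ref{assum: objective} with a careful analysis of how the denominator $\sqrt{\bnu_t}+\epsilon$ interacts with the momentum numerator $\bom_t$. Writing the update direction as $\bu_t := \bom_t/(\sqrt{\bnu_t}+\epsilon)$, I would first establish, on a high-probability event, a bound of the form $\eta_t\|\bu_t\|\le 1/L_1$, using that the hypothesis $(1-\bone)/\eta\ge C$ forces $\sqrt{\bnu_t}$ to be at least a constant multiple of $|\bom_t|$ via the $\btwo\bom_{t-1}^{\odot 2}$ component together with the recursion $\bom_t=\bone\bom_{t-1}+(1-\bone)\bg_t$. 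This step-size bound then lets me invoke the relaxed smoothness to derive the one-step descent inequality $f(\bw_t)\le f(\bw_{t-1})-\eta_t\langle\nabla f(\bw_{t-1}),\bu_t\rangle+\tfrac{1}{2}(L_0+L_1\|\nabla f(\bw_{t-1})\|)\eta_t^2\|\bu_t\|^2$, treating the decoupled weight-decay piece as a lower-order $O(\eta\lambda)$ perturbation.

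Next I would decompose the cross term as $\langle\nabla f(\bw_{t-1}),\bu_t\rangle=\langle\bom_t,\bu_t\rangle-\langle\bom_t-\nabla f(\bw_{t-1}),\bu_t\rangle$. The first, coordinate-wise nonnegative, piece lower-bounds an $\ell_1$-type magnitude of $\bom_t$ (hence of $\nabla f(\bw_{t-1})$ up to momentum error) using the same denominator control. The second is handled by a standard momentum-deviation estimate $\|\bom_t-\nabla f(\bw_{t-1})\|\le \tilde{\mathcal O}(R\sqrt{1-\bone})+\tilde{\mathcal O}(\eta(L_0+L_1\max_s\|\nabla f(\bw_s)\|)/(1-\bone))$, which follows from Assumption~\ref{assum: noise} combined with an induction on the smoothness-induced iterate drift $\|\nabla f(\bw_s)-\nabla f(\bw_{t-1})\|$; the sub-Gaussian tails convert high-probability bounds to in-expectation bounds at the cost of polylogarithmic factors absorbed into $\tilde{\mathcal O}$. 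Plugging in $1-\bone=\tilde\Theta(1/\sqrt T)$ and $\eta=\tilde\Theta(1/\sqrt T)$ keeps the per-step momentum error of order $\tilde{\mathcal O}(T^{-1/4})$. Telescoping the descent inequality then gives $\sum_{t=1}^T\eta_t\|\nabla f(\bw_t)\|\le f(\bw_0)-\inf f+\tilde{\mathcal O}(\sqrt T)$, and dividing by $\sum_t\eta_t=\tilde\Theta(\sqrt T)$ yields the stated $\mathbb{E}\min_t\|\nabla f(\bw_t)\|\le\tilde{\mathcal O}(T^{-1/4})$.

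The principal obstacle is the tight coupling between numerator and denominator. Unlike the AdamW analysis, in which $\bnu_t$ is an EMA of past squared gradients and is $\mathcal F_{t-1}$-measurable up to a standard $\btwo$-bias correction, here $\bnu_t$ mixes the $\mathcal F_{t-1}$-measurable quantity $\bom_{t-1}^{\odot 2}$ with the \emph{raw} current-step stochastic gradient $\bg_t$, so the denominator cannot be cleanly decoupled from the current-step noise via conditioning. I would address this by introducing the $\mathcal F_{t-1}$-measurable surrogate $D_t:=\sqrt{\btwo}\,|\bom_{t-1}|+\epsilon$ and noting that, with $1-\btwo=\tilde\Theta(1/T)$, the residual summand $(1-\btwo)\bg_t^{\odot 2}$ is $\tilde{\mathcal O}(1/T)$ on a high-probability event under Assumption~\ref{assum: noise}, so $\sqrt{\bnu_t}+\epsilon$ and $D_t$ agree coordinate-wise up to a multiplicative factor $1+\tilde{\mathcal O}(\sqrt{1-\btwo})=1+\tilde{\mathcal O}(1/\sqrt T)$. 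Replacing the true denominator by $D_t$ lets conditional expectations pass through cleanly, and the residual multiplicative error telescopes into a lower-order $\tilde{\mathcal O}(1)$ total term. This decoupling, together with the monotone lower bound $\bom_{t,i}^2/D_{t,i}\gtrsim |\bom_{t,i}|$ induced by $(1-\bone)/\eta\ge C$, is the crux that closes the descent sum and delivers the claimed rate.
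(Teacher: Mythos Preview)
Your proposal contains a decisive arithmetic slip in the momentum-deviation step. With $\eta=\tilde\Theta(1/\sqrt T)$ and $1-\bone=\tilde\Theta(1/\sqrt T)$, the drift term in your own estimate, $\tilde{\mathcal O}\bigl(\eta(L_0+L_1\max_s\|\nabla f(\bw_s)\|)/(1-\bone)\bigr)$, is $\tilde{\mathcal O}(1)$, not $\tilde{\mathcal O}(T^{-1/4})$, because $\eta/(1-\bone)=\tilde\Theta(1)$. Hence $\|\bom_t-\nabla f(\bw_{t-1})\|$ is only $\tilde{\mathcal O}(1)$ pointwise, and your telescoped error term is $\tilde{\mathcal O}(\sqrt T)$, which after dividing by $\sum_t\eta_t=\tilde\Theta(\sqrt T)$ yields a vacuous $\tilde{\mathcal O}(1)$ bound---exactly the arithmetic you wrote. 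The paper does not bound the momentum error pointwise. Instead it proves a \emph{coupled} inequality (Lemma~\ref{lem:sum_moment_error}) of the form $\sum_{t<\tau}\bigl(K\|\Delta_t\|^2-\|\nabla f(\bw_t)\|^2\bigr)\le\tilde{\mathcal O}(\sqrt T)$, where $\Delta_t=\bom_t-\nabla f(\bw_t)$; this is precisely where the hypothesis $(1-\bone)/\eta\ge C$ enters, to make the drift coefficient in the $\|\Delta_t\|^2$ recursion small relative to the $\|\nabla f(\bw_t)\|^2$ coefficient. Combining this with the descent sum $\sum_t\|\nabla f(\bw_t)\|^2\lesssim K'\sum_t\|\Delta_t\|^2+\tilde{\mathcal O}(\sqrt T)$ and solving the two inequalities simultaneously yields $\sum_t\|\nabla f(\bw_t)\|^2\le\tilde{\mathcal O}(\sqrt T)$, whence the rate. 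Your misattribution of the role of $(1-\bone)/\eta\ge C$ (it does not control the ratio $|\bom_t|/\sqrt{\bnu_t}$; that ratio is bounded by the purely algebraic $\max\{\bone/\sqrt{\btwo},(1-\bone)/\sqrt{1-\btwo}\}$ as in Lemma~\ref{lem: bounded update}) is a symptom of missing this coupling.

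Two further structural gaps. First, your $\ell_1$-type lower bound $\bom_{t,i}^2/D_{t,i}\gtrsim|\bom_{t,i}|$ fails whenever $|\bom_{t,i}|\ll\epsilon$; the paper instead works in $\ell_2$, upper-bounding $\sqrt{\bnu_{t,i}}+\epsilon\le G+\sigma+\epsilon$ (Lemma~\ref{lem: bound lr}) so that the first-order descent term dominates $\|\nabla f(\bw_t)\|^2/(G+\sigma+\epsilon)$. Second, that upper bound, the curvature factor $L_1\|\nabla f(\bw_t)\|$ in the second-order term, and your own claim that $(1-\btwo)\bg_t^{\odot 2}=\tilde{\mathcal O}(1/T)$ all presuppose an a priori bound on $\|\nabla f(\bw_t)\|$, which is circular. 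The paper breaks the circularity with a stopping time $\tau$ (first exit from a bounded function-value/noise regime), runs the entire analysis on $\{t<\tau\}$, and only at the end shows $\mathbb P(\tau\le T)\le\delta$ by feeding the descent estimate back into Lemma~\ref{lem:reversePL}. Once gradients are bounded on $\{t<\tau\}$, the denominator is deterministically sandwiched between $\epsilon$ and $G+\sigma+\epsilon$, so your surrogate $D_t$ and the $\mathcal F_{t-1}$-measurability detour are unnecessary.
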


\begin{proof}
The proof is relegated to Appendix~\ref{sec: appendix proof} due to space constraints.  
\end{proof}

The derived convergence rate matches the known lower bound of $\Omega(1/\sqrt[4]{T})$  for any gradient-based optimizer, including AdamW \citep{arjevani2022lower}. This result not only demonstrates that the convergence rate in Theorem \ref{thm: parameter_agnostic} is tight —-achieving the theoretically optimal bound —- but also provides a rigorous theoretical guarantee for AdamS’s efficiency in optimizing Transformer architectures.

\section{Empirical Performance of \nameA{}}
In this section, we apply \nameA{} for large language model pretraining tasks and post-training tasks to demonstrate that \nameA{} can achieve performance comparable to AdamW with similar hyperparameters while requiring significantly less memory.

\begin{figure*}[tb]
    \centering
    \begin{minipage}{0.3\linewidth}
        \centering
        \includegraphics[width=\linewidth]{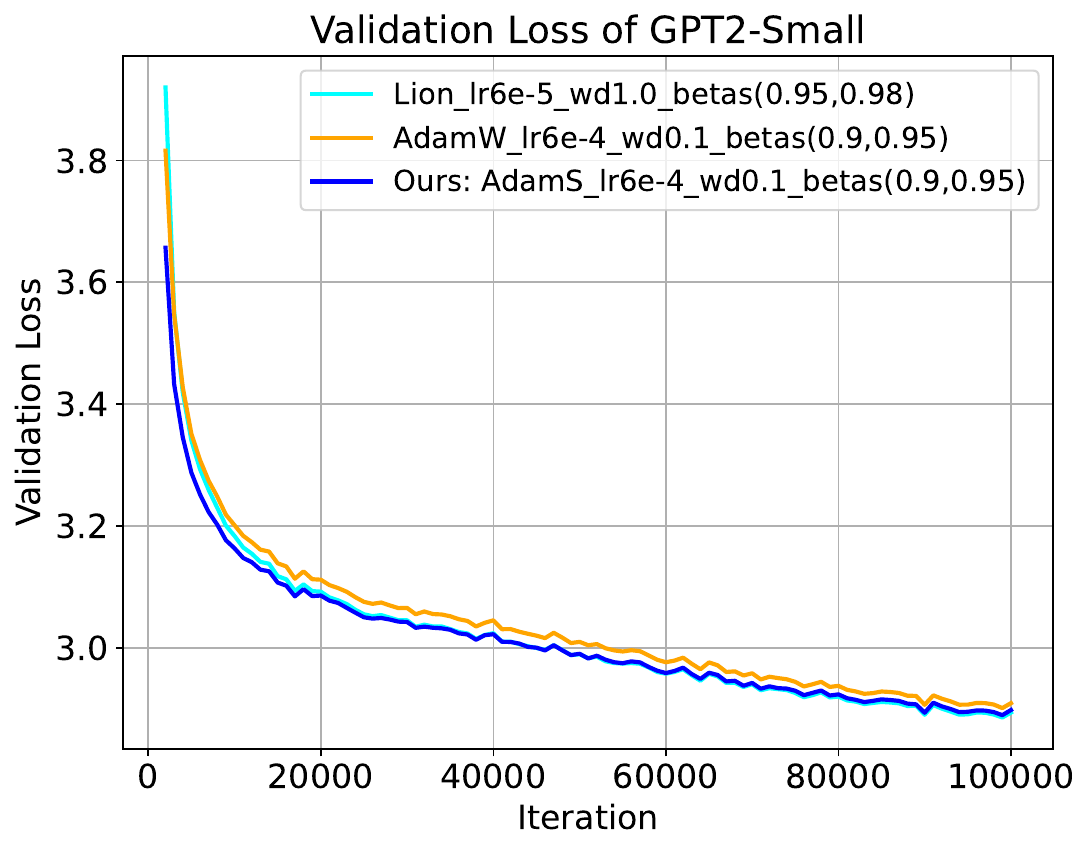}
    \end{minipage} \hfill
    \begin{minipage}{0.3\linewidth}
        \centering
        \includegraphics[width=\linewidth]{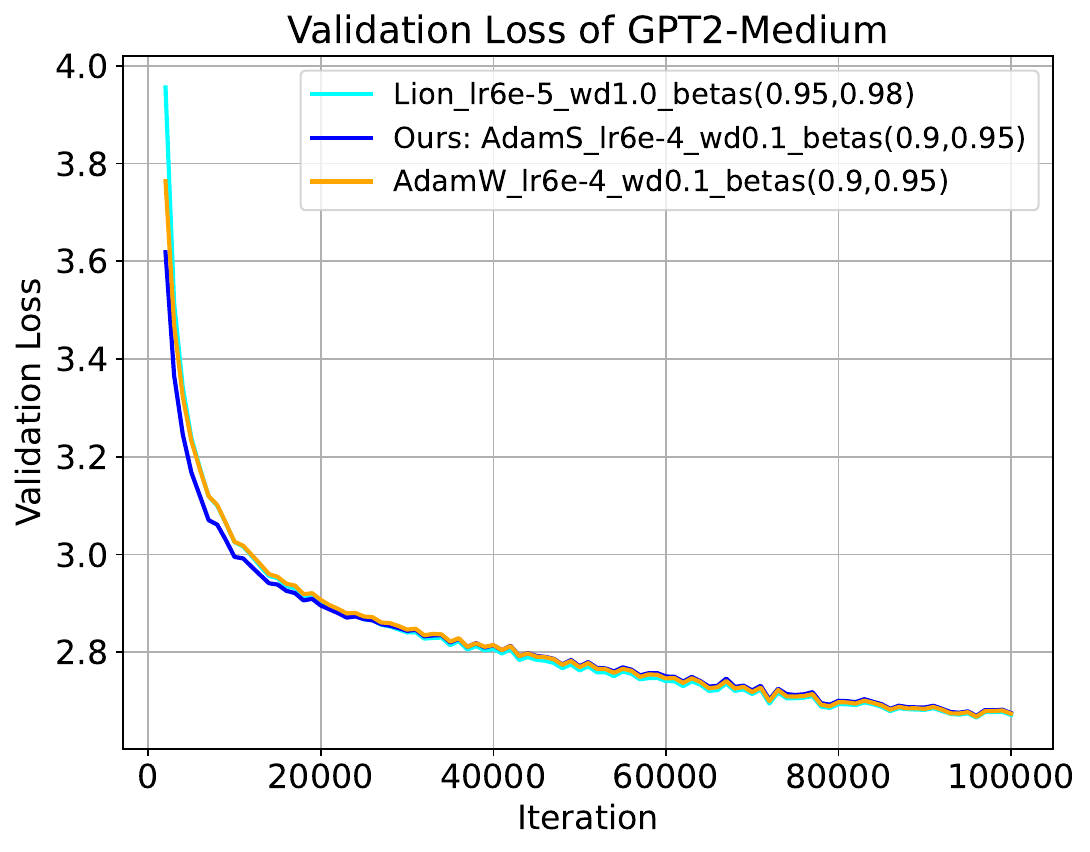}
    \end{minipage} \hfill
    \begin{minipage}{0.3\linewidth}
        \centering
        \includegraphics[width=\linewidth]{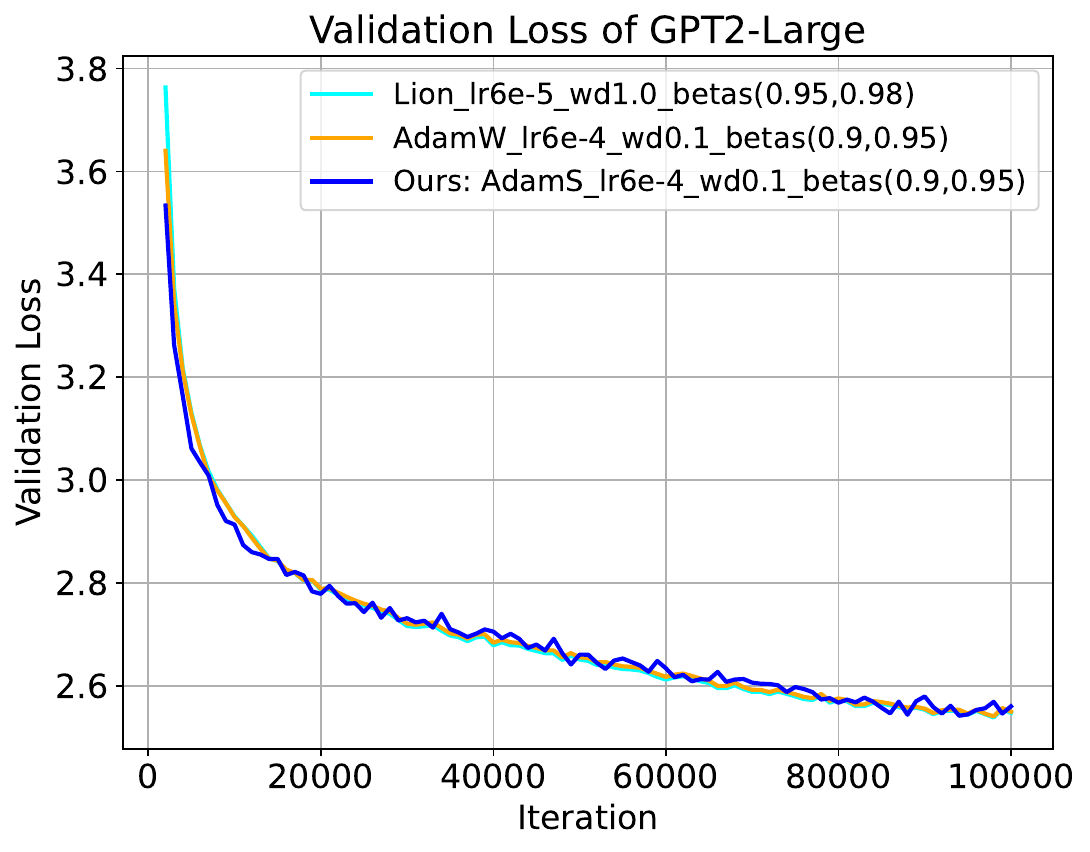}
    \end{minipage}
    \caption{
    Validation loss curves for pretraining GPT-2 models. Across three different model sizes and with the same hyperparameters as AdamW, the proposed \nameA{} achieves convergence comparable to baseline methods—without the need to store AdamW’s second-moment estimates.}
    \label{fig:gpt2-curves}
\end{figure*}

\subsection{GPT2 experiments}\label{subsec:gpt2-experiment}

In this experiment, we demonstrate that \nameA{} achieves performance comparable to AdamW  for pretraining GPT2 \cite{gpt2} on the OpenWebText dataset \cite{pile} using the popular nanoGPT codebase\footnote{https://github.com/karpathy/nanoGPT}. We evaluate three variants: GPT2 Small (125M parameters), GPT2 Medium (355M parameters), and GPT2 Large (770M parameters).

\textbf{Baselines.}  We primarily compare \nameA{} with AdamW~\citep{loshchilov2019adamw}, the most widely used optimizer in language modeling tasks, and Lion~\cite{chen2023symbolic}, a recently proposed optimizer that eliminates the need for second-moment  estimates, discovered by symbolic search.

We adopt typical hyperparameter choices, following the settings used in \cite{zhang2024adam,liu2023sophia}. For AdamW, we set \((\beta_1,\beta_2)=(0.9,0.95)\) with a weight decay of 0.1, and we use a learning rate of \(6\times10^{-4}\) for  the GPT2 Small model and $\text{lr}=3\times10^{-4}$ for the GPT2 Medium and GPT2 Large models.   For Lion, as suggested by \citet{chen2023symbolic}, we use \((\beta_1,\beta_2)=(0.95,0.98)\), set the learning rate to \(0.1 \times \text{lr}_{\text{AdamW}}\), and choose a weight decay of \(10 \times \text{weight\_decay}_{\text{AdamW}}\). For \nameA{}, we use the same hyperparameters as AdamW; that is, \(\text{lr}=\text{lr}_{\text{AdamW}}\), \((\beta_1,\beta_2)=(0.9,0.95)\), and \(\text{weight\_decay}=\text{weight\_decay}_{\text{AdamW}}\).

\textbf{Implementation.} Following standard practices, for all GPT-2 models, we set the context length to be 1024 tokens. We use a batch size of 480  and employ a cosine learning rate schedule, setting the final learning rate to $0.1 \times \text{lr}$ as suggested by \citet{rae2021scaling}. We employ gradient clipping by norm with a threshold of 1.0, and we use a fixed warm-up period of 2,000 steps. The algorithms are implemented in PyTorch~\citep{paszke2019pytorch}, and training is conducted in float16 precision on clusters equipped with Nvidia Ampere or Hopper GPUs for the GPT2-Small, Medium, and Large models.

\textbf{Results.} The results are shown in Figure~\ref{fig:gpt2-curves}. As observed in Figure~\ref{fig:gpt2-curves}, the performance of \nameA{} closely mirrors the AdamW curves across all three model sizes throughout the training process. This is achieved using the same hyperparameters as those for AdamW. Further details and longer training steps are provided in Appendix~\ref{app:more-experiments}.

\subsection{Llama2 Pretraining Experiments}

In this experiment, we confirm the behavior of AdamS for 
pretraining an even larger model Llama2-7B~\citep{touvron2023llama2}. It is trained with  the well-known Torchtitan library\footnote{https://github.com/pytorch/torchtitan} on the C4 dataset~\citep{raffel2020exploring}.

\textbf{Training setup.} We use the same hyperparameters for Llama2-7B pretraining as those in \citet{touvron2023llama2}. The training setup involves a batch size of 1024, a context length of 4096, where the batch size is 4M in terms of tokens, and gradient clipping with a maximum norm of 1.0. The learning rate schedule includes a fixed 2000 step warmup followed by linear decay. The training is conducted in bfloat16 precision on one node equipped with 8 Nvidia  Hopper GPUs with 80G memory.  Due to budget limitations, we train the model for 8K steps, which corresponds to processing over 32B tokens. The validation loss is evaluated every 200 steps.

\textbf{Other hyperparameter choice.} For AdamW, we use \((\beta_1,\beta_2) = (0.9, 0.95)\), a peak learning rate of \(3\times10^{-4}\), and a weight decay of 0.1. For \nameA{}, Adam-mini, we use the same hyperparameters as AdamW. For Lion, we use the recommended settings: \(\text{lr} = 0.1 \times \text{lr}_{\text{AdamW}}\) and \(\text{weight\_decay} = 10 \times \text{weight\_decay}_{\text{AdamW}}\).

\textbf{Results.} The results are summarized in Figure~\ref{fig:llama}. As shown in Figure~\ref{fig:llama}, \nameA{} achieves slightly better convergence than other strong baselines: AdamW, Adam-mini and Lion across the training trajectory under the same  default hyperparameters as in \citet{touvron2023llama2}. Notably, training with \nameA{} reduces memory consumption by 20\% when using a popular training recipe, i.e., Fully Sharded Data Parallel (FSDP) technique \cite{paszke2019pytorch} on 4 NVIDIA Hopper GPUs. By eliminating the need to communicate second-moment estimates across GPUs and nodes, \nameA{} alleviates communication bottlenecks, a critical advantage for low-end GPU clusters where inter-card bandwidth is often a limiting factor.

Due to space limit, we present a setting of Llama2-13B pretraining with smaller batch size in Appendix~\ref{app:more-experiments}.

\subsection{ RL Post-training of LLMs}

\begin{figure}[htb!]
    \centering
    \begin{minipage}{0.49\linewidth}
        \centering
        \includegraphics[width=\linewidth]{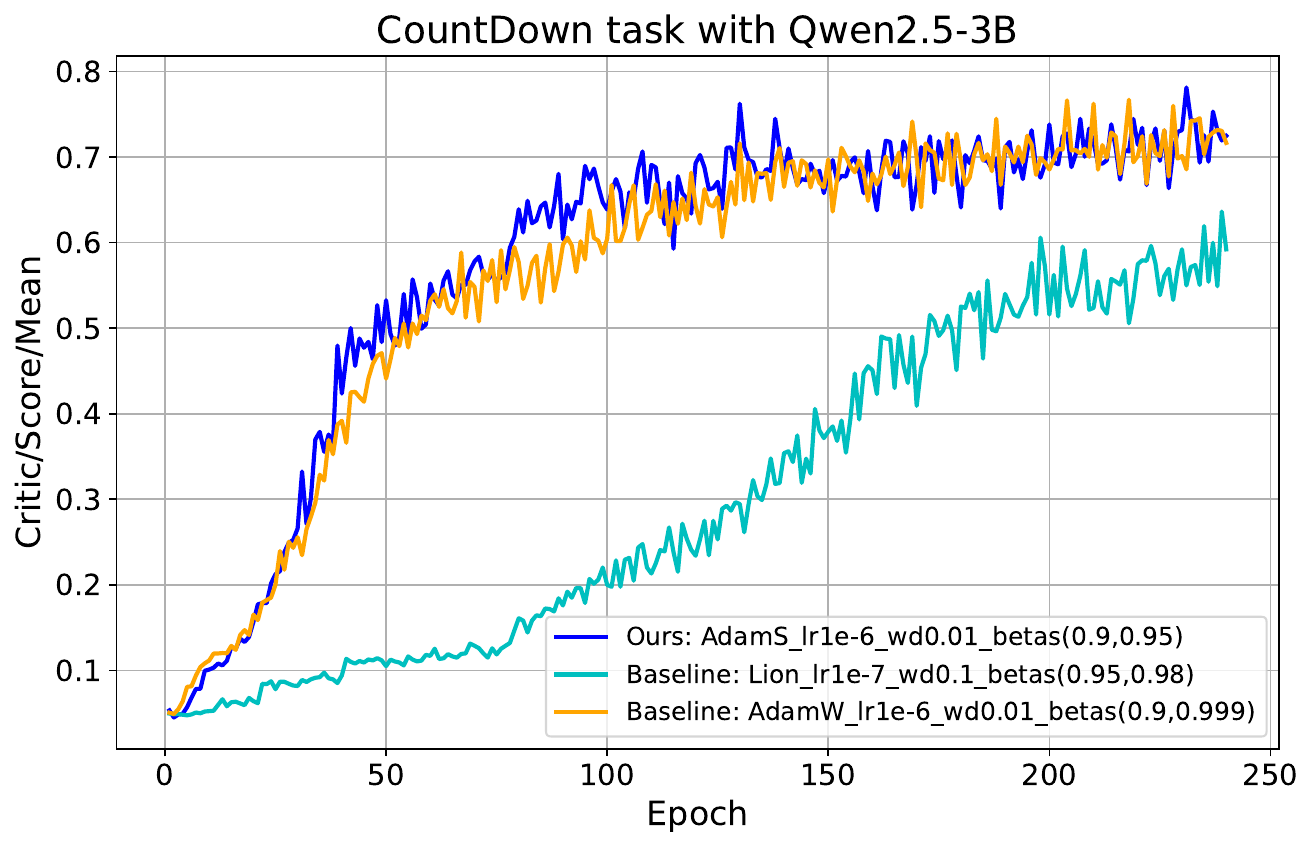}
    \end{minipage} \hfill
    \begin{minipage}{0.49\linewidth}
        \centering
        \includegraphics[width=\linewidth]{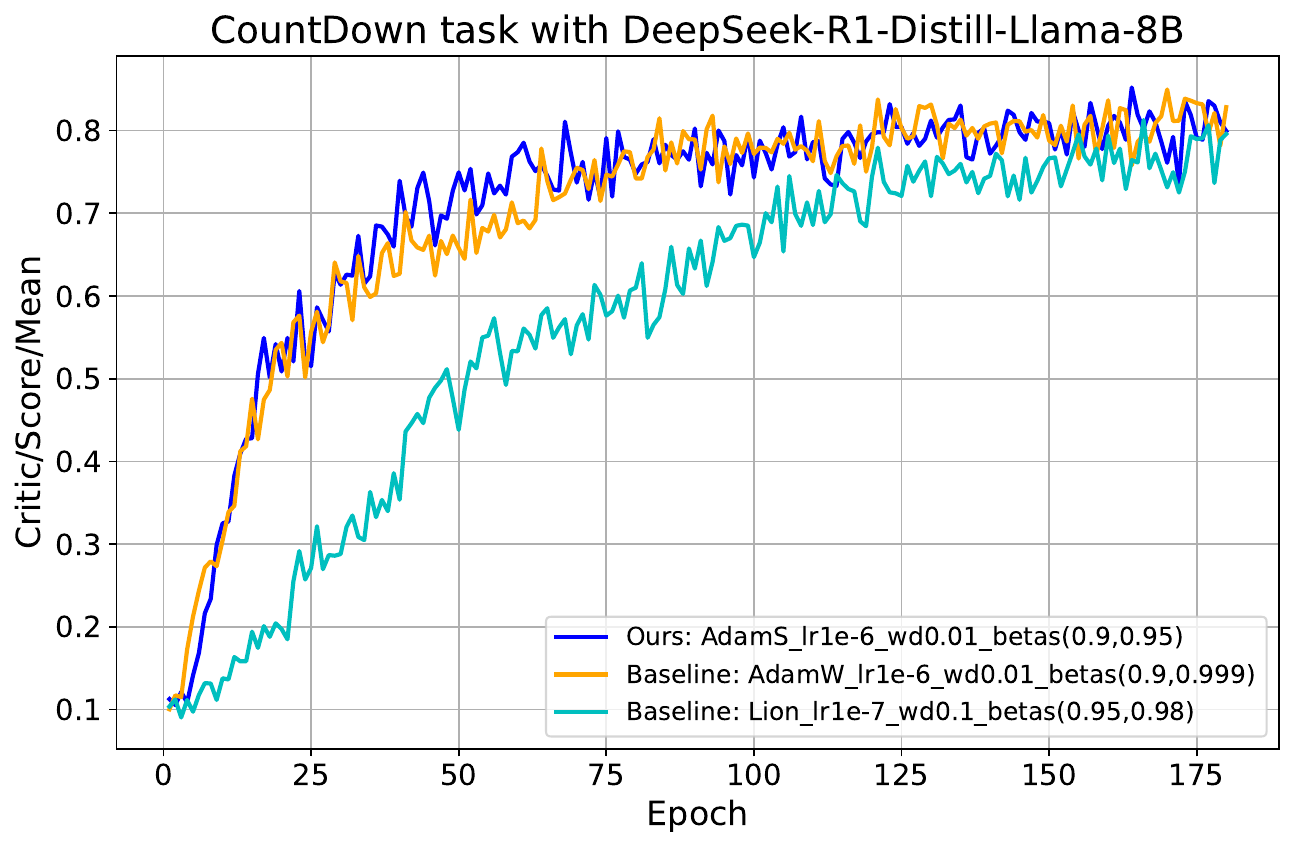}
    \end{minipage} \hfill
    \caption{Mean critic scores for reinforcement learning (RL) post-training using the GRPO algorithm on the CountDown task are presented for the Qwen2.5-3B and DeepSeek-R1-Distill-Llama-8B models. The proposed \nameA{} closely resembles AdamW’s performance trajectory, achieving similar convergence curves. In contrast, Lion with default hyperparameters demonstrates significantly slower convergence under the same conditions.}
    \label{fig:r1-zero}
    \vspace{-2mm}
\end{figure}

In this experiment, we leverage the TinyZero project~\cite{tinyzero} that  provides a clean, minimal, and accessible reproduction of the DeepSeek R1-Zero framework~\cite{guo2025deepseek}. We choose two models Qwen2.5-3B~\cite{qwen2.5} and R1-Distilled-Llama8B~\cite{guo2025deepseek} and evaluate the DeepSeek R1-Zero Group Relative Policy Optimization (GRPO) method on the Countdown Numbers Game. In this task, the model is asked to use a set of randomly chosen numbers along with basic arithmetic operations (\(+, -, \times, \div\)) to reach a target number, with each number used only once.

\textbf{Hyperparameter choice.} For the baseline AdamW setup, we use the default learning rate of \(1 \times 10^{-6}\), \((\beta_1,\beta_2) = (0.9, 0.999)\), and a weight decay of \(1 \times 10^{-2}\). We test the Group Relative Policy Optimization (GRPO) reinforcement learning algorithm \cite{shao2024deepseekmath,guo2025deepseek} with all other hyperparameters maintained as in the original project. For \nameA{}, we adopt the same hyperparameters as AdamW, except that we set \(\beta_2 = 0.95\) for good stability, as explained in Section 2.2 and Figure~\ref{fig:update-magnitude}. For Lion, we follow the recommendations from the original paper by setting \(\text{lr} = 0.1 \times \text{lr}_{\text{AdamW}}\), \(\text{weight\_decay} = 10 \times \text{weight\_decay}_{\text{AdamW}}\), and \((\beta_1, \beta_2) = (0.95, 0.98)\).

\textbf{Implementation.} The TinyZero framework implements the DeepSeek R1-Zero reinforcement learning objective, which encourages the models to generate an extended chain-of-thought before producing a final answer. This approach aims to guide the models in developing a structured reasoning process for the Countdown Numbers Game.

\textbf{Results.}  The results are shown in Figure~\ref{fig:r1-zero}.  
Across two distinct base models—Qwen2.5-3B and the distilled DeepSeek-R1-Distill-Llama-8B—the score curves of \nameA{} closely align with those of AdamW, even occasionally surpassing its validation performance. This consistency underscores \nameA{}’s ease of adoption across diverse tasks, requiring no specialized tuning. In contrast, Lion, when applied with its default hyperparameters, exhibits much slower convergence under identical experimental conditions.

This point holds significant practical value: while many optimizers excel in some specific scenarios with carefully tuned hyperparameters, \nameA{}’s robust performance easily generalizes to unseen tasks without much hyperparameter tuning, making it a scalable solution for both current and future applications.

\subsection{Sensitivity to Hyperparameters}

We ablate the hyperparameter choices of $(\beta_1,\beta_2)$ of \nameA{}. Table~\ref{tab:beta-results} shows the performance sensitivity to $(\beta_1, \beta_2)$ for the GPT2-small pretraining task. The numbers are validation loss after training 100K iterations with other hyperparameters the same as those in Section~\ref{subsec:gpt2-experiment}.
\begin{table}[htbp]
  \centering
\setlength{\tabcolsep}{4pt}          
  
  \begin{tabular}{cccccc}
    \toprule
    $\beta_1 \backslash \beta_2$   & 0.90   & 0.95   & 0.98   & 0.99  &0.999 \\
    \midrule
    0.90                          & 2.902  & 2.898  & 2.904  & 2.904 &2.902  \\
    0.95                           &  -      & 2.897  & 2.892  & 2.898 & 3.460 \\
    \bottomrule
  \end{tabular}
  \caption{Validation loss for different $(\beta_1,\beta_2)$ pairs of GPT2-small pretraining with \nameA{}.}
  \label{tab:beta-results}
\end{table}

These results indicate that AdamS is robust and stable  over a wide range of configurations except for very large $(\beta_1,\beta_2)$ pair, supporting its practical use and easy adoption.

\section{Discussion and Conclusion} \label{sec:discussion}

We have proposed a well-motivated design of LLM optimizer, \nameA{}, which can serve as the newly default optimizer for training large-scale language model training, because of its efficiency, simplicity, and theoretical rigor. By replacing second-moment estimation with a momentum-weighted root mean square denominator, the method achieves computational parity with SGD while matching the performance of Adam-family optimizers in both pretraining and post-training scenarios. Its seamless integration into existing frameworks—enabled by AdamW-compatible hyperparameters and model-agnostic design—removes adoption barriers, offering practitioners a "plug-and-play" upgrade. 

The theoretical property of \nameA{} has also been extensively analyzed, including the update magnitude estimation and convergence under relaxed smoothness assumption. This theoretical insight, coupled with empirical validation across architectures (e.g., GPT-2, Llama2) and training paradigms (e.g., RL post-training), demonstrates robustness to scale and task diversity. Notably, \nameA{}’s elimination of communication overhead for second-moment statistics positions it as a scalable solution for communication-bounded environments.

Future work may explore \nameA{}’s applicability to emerging architectures and its synergies with advanced parallelism strategies  for next-generation LLM development.

\section*{Limitations}

While \nameA{} achieves promising performance across tasks and model scales, several limitations deserve discussion. First, our experiments were constrained by computational resources, particularly in pretraining scenarios (e.g., Llama2-7B \& 13B). Validating \nameA{}’s efficacy at extreme scales—such as models beyond 100B parameters, datasets exceeding 1T tokens, or emerging architectures like Mixture of Experts (MoE)—remains critical for confirming its scalability in production-grade pipelines. Such studies would require computational resources far beyond our current capacity.

Second, fairly benchmarking optimizers has inherent challenges due to confounding variables like learning rate schedules, weight decay policies, optimizer-specific hyperparameters (e.g., \nameA{}’s momentum weighting), and implementation efficiency. While our work compares \nameA{} against strong baselines (AdamW, Lion) using established hyperparameters, we limited exhaustive hyperparameter searches across all optimizers to maintain parity.

These limitations underscore the need for community-driven standardization of optimizer evaluations and deeper exploration of \nameA{}’s behavior in extreme-scale regimes. To foster reproducibility, we will release all code, configurations, and training protocols to facilitate reproducibility and encourage broader investigation.

\newpage
\appendix
\onecolumn

\section{Algorithms: Lion and Adam-mini}\label{app:lion}

Two related algorithms used as baselines in the paper are presented as follows.
\begin{algorithm}
    \caption{{ Lion Optimizer \cite{chen2023symbolic}}}
    \begin{algorithmic}[1]
    \STATE \textbf{Input:} momentum parameters $\beta_1$, $\beta_2$, weight decay $\lambda$, learning rate $\eta$, objective function $f$
    \STATE \textbf{Initialize} starting point $\bw_0$, initial  $\bom_0\leftarrow 0, t\leftarrow 0$
    \WHILE{$\bw_t$ not converged}
        \STATE $t\leftarrow t+1$
        \STATE $\bg_t \leftarrow \nabla_{\bw}{f(\bw_{t-1})}$
        \STATE \#\#\# {update model parameters}
        \STATE {$\bu_t \leftarrow \beta_1 \bom_{t-1} + (1-\beta_1)\bg_t$}
        \STATE $\bw_t \leftarrow \bw_{t-1} - \eta_t(\text{sign}(\bu_t) + \lambda\bw_{t-1})$
        \STATE \#\#\# {update momentum tracking}
        \STATE $\bom_t \leftarrow \beta_2 \bom_{t-1} + (1 - \beta_2)\bg_t$
    \ENDWHILE
    \STATE \textbf{return} $\bw_t$
    \end{algorithmic}
    \label{alg:lion}
\end{algorithm}

\begin{algorithm}
\caption{{Adam-mini}~\citep{zhang2024adam}}\label{alg:adam-mini}
\begin{algorithmic}[1]
\STATE \textbf{Input:} weight-decay coefficient $\lambda$ and current step $t$
\STATE \textbf{Partition:} Partition params into \texttt{param\_blocks} by \textbf{Principle}~\ref{principle:adam-mini}
\FOR{\texttt{param in param\_blocks}}
    \STATE $g = \texttt{param.grad}$
    \STATE $\texttt{param} = \texttt{param} - \eta_t \cdot \lambda \cdot \texttt{param}$
    \STATE $m = (1 - \beta_1) \cdot g + \beta_1 \cdot m$
    \STATE $\hat{m} = \frac{m}{1 - \beta_1^t}$
    \STATE {\color{blue}$v = (1 - \beta_2) \cdot \text{mean}(g \odot g) + \beta_2 \cdot v$}
    \STATE $\hat{v} = \frac{v}{1 - \beta_2^t}$
    \STATE $\texttt{param} = \texttt{param} - \eta_t \cdot \frac{\hat{m}}{\sqrt{\hat{v}} + \epsilon}$
\ENDFOR
\end{algorithmic}
\end{algorithm}
\begin{principle}[\citet{zhang2024adam} Principle 1] \label{principle:adam-mini}
 We should partition parameters into blocks, such that each parameter block is associated
with the smallest dense sub-block in Hessian.
\end{principle}

It is worthy noting that Algorithm~\ref{alg:adam-mini} requires  partition of parameters based on the Hessian structure of the architecture, which makes it not able to be model agnostic.




\section{More Experiments}\label{app:more-experiments}
We put more experiments here due to space limit.
\subsection{Llama2-13B Pretraining Experiments}

In this experiment, we confirm the behavior of AdamS for 
pretraining an even larger model Llama2-13B~\citep{touvron2023llama2}. It is trained with  the well-known Torchtitan library\footnote{https://github.com/pytorch/torchtitan} on the C4 dataset~\citep{raffel2020exploring}.

\textbf{Training setup.}   The training setup involves a batch size of \(2 \times 8\), a context length of 2048, and gradient clipping with a maximum norm of 1.0. The learning rate schedule includes a fixed 100-step warmup followed by linear decay. The training is conducted in bfloat16 precision on one node equipped with 8 Nvidia  Hopper GPUs with 80G memory.  Due to budget limitations, we train the model for 30K steps, which corresponds to processing over 0.96B tokens. This follows the setting used in Adam-mini~\citep{zhang2024adam}.

\textbf{Other hyperparameter choice.} For AdamW, we use \((\beta_1,\beta_2) = (0.9, 0.95)\), a peak learning rate of \(1\times10^{-4}\), and a weight decay of 0.1. For \nameA{}, we use the same hyperparameters as AdamW. 


\textbf{Results.} The results are summarized in Figure~\ref{fig:llama2-13B}. As shown in Figure~\ref{fig:llama2-13B}, \nameA{} achieves performance nearly identical to AdamW across the training trajectory under the same  hyperparameters. 

\begin{figure}[htb]
\begin{center}
\begin{minipage}[t]{0.49\linewidth}
\centering
{\includegraphics[width=\linewidth]{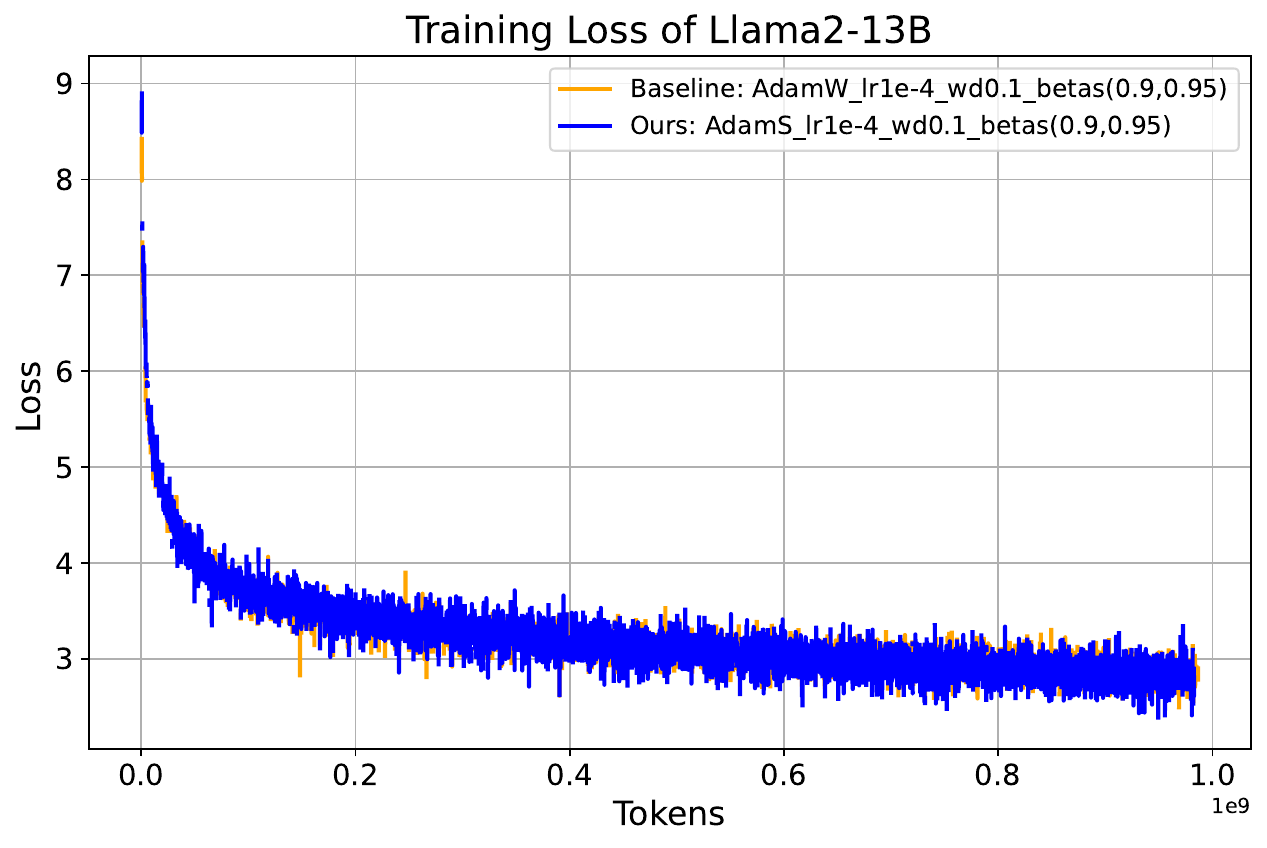}} 
\end{minipage}

\end{center}
\caption{
Training and validation loss curves for pretraining LLaMA 2–13B models. The proposed \nameA{} achieves convergence comparable to or better than baseline methods under the same hyperparameter settings as AdamW, while eliminating the need to store AdamW’s second-moment estimates.
}
\label{fig:llama2-13B}
\end{figure}

\subsection{GPT2 Experiments}

\textbf{Longer pretraining.} 
In this part, the pretraining setup is the same as Section~\ref{subsec:gpt2-experiment}, we present the final validation losses after pretraining for 100K and 300K in Table~\ref{tab:gpt2-adams}. We can see that the performance of \nameA{} closely mirrors the AdamW curves across all three model sizes throughout the training process. This is achieved using the same hyperparameters as those for AdamW. 

\begin{table*}[htb!]
    \centering
    
    \begin{tabular}{lcccccc}
    \toprule
      Model  &Iterations &Optimizer  &Peak LR & Weight decay&   $(\beta_1,\beta_2)$     &Valid. PPL  \\ \midrule
    124M & 100K & AdamW  &6e-4& 0.1 & $( 0.9, 0.95)$ & 2.902 \\
    &&Lion  & 6e-5& 1.0 & $(0.95, 0.98)$  & 2.886 \\
    &&\nameA{}  & 6e-4& 0.1 & $(0.9, 0.95)$  & 2.890\\ 
    \cmidrule{2-7}
    &300K & AdamW  & 6e-4& 0.1 &$(0.9, 0.95)$  & 2.867 \\
    &&Lion  & 6e-5&1.0 &$(0.95, 0.98)$  & 2.847 \\
    &&\nameA{}  & 6e-4& 0.1 & $(0.9, 0.95)$  & 2.866 \\ 
    
   

    \bottomrule
    \end{tabular}
    \caption{Comparison of Lion, AdamW and \nameA{} on training GPT2 with the OpenWebText dataset.}\label{tab:gpt2-adams}
\end{table*}

\textbf{Comparison with other optimizers.} As the Adafactor and SM3 performs strictly inferior to AdamW for GPT2-small pretraining, as shown in Figure 8 of Zhang et al. 2024 (the Adam-mini paper) and we omit the comparison here.

We add experiments on GPT2-small pretraining with Adagrad and RMSProp. We note that there are not public training recipes for RMSprop and Adagrad of large language model pretraining. We use the same learning rate and learning rate decay schedule as those of AdamW, and use other hyperparameters as default. The results are shown in below.

\begin{table}[h]
\centering

\begin{tabular}{lcccc}
\toprule
\textbf{Metric} & \textbf{AdamW} & \textbf{Adagrad} & \textbf{RMSprop} & \textbf{AdamS} \\
\midrule
Valid loss of GPT-2 small & 2.909 & 3.887 & 3.089 & 2.898 \\
\bottomrule
\end{tabular}
\caption{Validation loss of GPT-2 small after 100K training iterations using different optimizers.}
\label{tab:gpt2_valid_loss_transposed}
\end{table}

\section{Derivation of the Learning Rate under $(L_0,L_1)$ Smoothness} \label{app:descent}

The \emph{smoothness constant \( L(\bw) \)} governs how much the gradient can change locally. If \( L(\bw) \) scales with \( \|\nabla f(\bw)\| \), the curvature (and thus the risk of overshooting) increases with the gradient's magnitude. This necessitates a smaller learning rate when the gradient is large and allows a larger rate when the gradient is small.

Here is a brief derivation for the above intuition.

\emph{Descent Lemma}: For \( L(\bw) \)-smooth \( f \), the update \( \bw_{t+1} = \bw_t - \eta \nabla f(\bw_t) \) satisfies:
   \begin{flalign*}
   f(\bw_{t+1}) \leq& f(\bw_t) - \eta \|\nabla f(\bw_t)\|^2 + \frac{\eta^2 L(\bw_t)}{2} \|\nabla f(\bw_t)\|^2.       
   \end{flalign*}

Substitute \( L(\bw_t) \leq  L_0+L_1\|\nabla f(\bw_t)\| \):
   \[
   f(\bw_{t+1}) \leq f(\bw_t) - \eta \|\nabla f(\bw_t)\|^2 + \frac{\eta^2 (L_0+L_1\|\nabla f(\bw_t)\|)}{2} \|\nabla f(\bw_t)\|^2.
   \]
Ensure Decrease: For \( f(\bw_{t+1}) \leq f(\bw_t) \), require:
   \[
   -\eta \|\nabla f(\bw_t)\|^2 + \frac{L_0+L_1\|\nabla f(\bw_t)\| }{2} \eta^2\|\nabla f(\bw_t)\|^2 \leq 0.
   \]
   Factor out \( \eta \|\nabla f(\bw_t)\|^2 \):
   \[
   \eta \|\nabla f(\bw_t)\|^2 \left( -1 + \eta\frac{L_0+L_1\|\nabla f(\bw_t)\|}{2} \right) \leq 0.
   \]
   This implies:
   \[
   \eta \leq \frac{2}{L_0+L_1\|\nabla f(\bw_t)\|}.
   \]
   

\section{Proof of Theorem \ref{thm: parameter_agnostic}}
\label{sec: appendix proof}

This section collects the proof of Theorem \ref{thm: parameter_agnostic}. Overall, the proof is inspired by the proof of Theorem 4.2 in \citet{li2023convergence}, which utilizes stopping time to bound the norm of stochastic gradients.

In the following proof, we define
\small
\begin{gather}
\label{eq: def sigma}
    \sigma \overset{\triangle}{=} \max\left\{\sqrt{2R^2 \log \frac{T}{\delta}}, L\frac{\eta_t}{1-\bone}\max\{\frac{\beta_1}{\sqrt{\beta_2}}, \frac{1- \beta_1}{\sqrt{1- \beta_2}}\}, \frac{3L_0}{4L_1}\right\},\\
    \label{eq: def G}
    G \overset{\triangle}{ = } \max\{\frac{3L_0}{4L_1}, 72L_1(f(\bw_1) -f^*), \sqrt{72L_1\sigma^2\eta_t((1-\bone) T +1)},  60\sqrt{L_1R^2\sigma^2\eta_t\sqrt{2T\log(1/\delta)}}\},
    \\
    \label{eq: def F}
    F \overset{\triangle}{ = } \frac{G^2}{ 3(3L_0+4L_1 G)},
    \\
    \label{eq: def C}
    C \overset{\triangle}{ = } \sqrt{\frac{4L^2}{\varepsilon^4}(G+\sigma+\varepsilon)}.
\end{gather}
\normalsize

We consider the following stopping time:

\begin{align}
\label{eq: def stop time}
	\tau := \min\{t\mid f(\bw_t)-f^*>F\}\land \min\{t\mid \Vert{\nabla f(\bw_t)-\bg_t} \Vert>\sigma\} \land  (T+1).
\end{align}

Due to Lemma \ref{lem:reversePL} and the definition of $F$ (Eq. (\ref{eq: def F})), one can easily see that for any $t < \tau$, $\norm{\nabla f(\bw_t)} \le G$.

Also, as we are dealing with optimizers with coordinate-wise learning rates, we introduce the following norm to ease the burden of writing. Specifically, let $\boldsymbol{b}\in \mathbb{R}^d$ be a vector with each coordinate positive. For any $\boldsymbol{a} \in \mathbb{R}^d$, we define
\begin{equation*}
    \Vert \boldsymbol{a} \Vert_{\boldsymbol{b}} = \sqrt{\langle \boldsymbol{a} \odot \boldsymbol{b}, \boldsymbol{a}\rangle}.
\end{equation*}

\subsection{Useful Lemmas}

The following lemma bounds the change of $f$ through its local second-order expansion.

\begin{lemma}
\label{lem: expansion}
Let Assumption \ref{assum: objective} holds.
Then, for any three points $\bw^1, \bw^2\in \mathbb{R}^d$  satisfying $\Vert \bw^1-\bw^2\Vert \le \frac{1}{L_1}$, we have
\begin{equation*}
    f(\bw^2)\le  f(\bw^1)+\langle \nabla f(\bw^1), \bw^2-\bw^1\rangle + \frac{1}{2}(L_0+L_1 \Vert \nabla f(\bw^1)\Vert) \Vert\bw^2-\bw^1\Vert^2 
    . 
\end{equation*}
\end{lemma}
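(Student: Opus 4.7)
The plan is to prove this descent lemma via the fundamental theorem of calculus applied along the line segment connecting $\bw^1$ and $\bw^2$, then bound the resulting integral using the $(L_0, L_1)$-smoothness from Assumption~\ref{assum: objective}. Parameterize the segment by $\bw(s) = \bw^1 + s(\bw^2 - \bw^1)$ for $s \in [0,1]$, and write
\begin{equation*}
f(\bw^2) - f(\bw^1) \;=\; \int_0^1 \langle \nabla f(\bw(s)), \bw^2 - \bw^1\rangle \, ds.
\end{equation*}
Adding and subtracting $\nabla f(\bw^1)$ inside the inner product splits the right-hand side into the linear term $\langle \nabla f(\bw^1), \bw^2 - \bw^1 \rangle$ plus a remainder integral involving $\nabla f(\bw(s)) - \nabla f(\bw^1)$.

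Next, I would apply Cauchy--Schwarz inside the remainder integral, which reduces the task to bounding $\| \nabla f(\bw(s)) - \nabla f(\bw^1) \|$. Here is where Assumption~\ref{assum: objective} enters: since $\| \bw(s) - \bw^1 \| = s \|\bw^2 - \bw^1\| \le \|\bw^2 - \bw^1\| \le 1/L_1$ for all $s \in [0,1]$, the radius hypothesis of the assumption is satisfied along the whole segment, giving
\begin{equation*}
\| \nabla f(\bw(s)) - \nabla f(\bw^1) \| \;\le\; (L_0 + L_1 \|\nabla f(\bw^1)\|) \cdot s \|\bw^2 - \bw^1\|.
\end{equation*}
Substituting back and integrating $\int_0^1 s \, ds = 1/2$ yields precisely the claimed quadratic upper bound.

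No step in this argument is substantive; the only thing to check carefully is that the segment stays within the $1/L_1$-ball around $\bw^1$ so that the $(L_0, L_1)$-smoothness inequality may be invoked at every $s \in [0,1]$, which follows immediately from the hypothesis $\|\bw^1 - \bw^2\| \le 1/L_1$ and $s \le 1$. There is no real obstacle; this is the standard generalization of the classical descent lemma to the $(L_0, L_1)$-smooth regime.
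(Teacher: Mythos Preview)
Your proposal is correct and follows essentially the same approach as the paper's proof: apply the Fundamental Theorem of Calculus along the segment, split off the linear term, apply Cauchy--Schwarz, invoke Assumption~\ref{assum: objective} (valid since $\|\bw(s)-\bw^1\|\le \|\bw^2-\bw^1\|\le 1/L_1$), and integrate $\int_0^1 s\,ds=1/2$. The paper's argument is identical up to notation.
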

\begin{proof}
By the Fundamental Theorem of Calculus, we have
\begin{align*}
   &f(\bw^2)\\
   =& f(\bw^1)+\int_{0}^1 \langle \nabla f(\bw^1+a(\bw^2-\bw^1)), \bw^2-\bw^1\rangle
    \mathrm{d}a
    \\
    =& f(\bw^1)+\langle \nabla f(\bw^1), \bw^2-\bw^1\rangle +\int_{0}^1 \langle \nabla f(\bw^1+a(\bw^2-\bw^1))-\nabla f(\bw^1), \bw^2-\bw^1\rangle
    \mathrm{d}a
    \\
    \le & f(\bw^1)+\langle \nabla f(\bw^1), \bw^2-\bw^1\rangle +\int_{0}^1 \Vert \nabla f(\bw^1+a(\bw^2-\bw^1))-\nabla f(\bw^1)\Vert \Vert\bw^2-\bw^1\Vert
    \mathrm{d}a
    \\
    \overset{(\star)}{\le} &  f(\bw^1)+\langle \nabla f(\bw^1), \bw^2-\bw^1\rangle 
     +\int_{0}^1 (L_0+L_1 \Vert \nabla f(\bw^1)\Vert )\Vert a(\bw^2-\bw^1)
   \Vert \Vert\bw^2-\bw^1\Vert
    \mathrm{d}a
    \\
    \le & f(\bw^1)+\langle \nabla f(\bw^1), \bw^2-\bw^1\rangle + \frac{1}{2}(L_0+L_1 \Vert \nabla f(\bw^1)\Vert) \Vert\bw^2-\bw^1\Vert^2
   ,
\end{align*}
where Inequality $(\star)$ uses the fact $\Vert \bw^2 - \bw^1 \Vert \le \frac{1}{L_1}$, so that
Assumption \ref{assum: objective} can be applied.

The proof is completed.
\end{proof}

The following lemma bounds the gradient norm through the function value when Assumption \ref{assum: objective} holds.
\begin{lemma}
	\label{lem:reversePL} Under Assumptions~\ref{assum: objective}, we have $\norm{\nabla f(\bw)}^2\!\le 3(3L_0+4L_1\norm{\nabla f(\bw)})(f(\bw)-f^*)$.
\end{lemma}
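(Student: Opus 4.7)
The plan is to construct an auxiliary point by taking a single (virtual) gradient step from $\bw$ with an appropriately chosen step size, apply the descent lemma (Lemma \ref{lem: expansion}) to upper bound the function value there, and finally use $f^* \le f(\bw')$ to convert that descent into the desired inequality. The main design choice is the step size: it must be small enough that Lemma \ref{lem: expansion} applies (which needs $\|\bw - \bw'\| \le 1/L_1$), yet large enough that the descent guarantee produces a $\|\nabla f(\bw)\|^2$-scale improvement.

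Concretely, denote $g \overset{\triangle}{=} \|\nabla f(\bw)\|$ and set $\bw' = \bw - \eta \nabla f(\bw)$ with $\eta = \frac{1}{L_0 + L_1 g}$. First I would verify the radius condition:
\[
\|\bw - \bw'\| = \eta g = \frac{g}{L_0 + L_1 g} \le \frac{1}{L_1},
\]
which holds for every value of $g \ge 0$, so Lemma \ref{lem: expansion} is applicable with $\bw^1 = \bw$ and $\bw^2 = \bw'$.

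Plugging this $\eta$ into Lemma \ref{lem: expansion} gives
\[
f(\bw') \le f(\bw) - \eta g^2 + \tfrac{1}{2}(L_0 + L_1 g)\eta^2 g^2 = f(\bw) - \frac{g^2}{2(L_0 + L_1 g)}.
\]
Rearranging and using $f(\bw') \ge f^*$ yields $g^2 \le 2(L_0 + L_1 g)(f(\bw) - f^*)$, which is strictly stronger than $g^2 \le 3(3L_0 + 4L_1 g)(f(\bw) - f^*)$ since $2L_0 \le 9L_0$ and $2L_1 g \le 12 L_1 g$. Thus the stated bound follows immediately.

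I do not anticipate a serious obstacle: the only delicate step is ensuring the chosen perturbation respects the $1/L_1$-locality built into Assumption \ref{assum: objective}, and the specific choice $\eta = 1/(L_0+L_1 g)$ makes this automatic. The looser constants $3(3L_0+4L_1 g)$ in the statement are likely chosen for convenience when this lemma is combined downstream (e.g., with the definitions of $G$ and $F$ in equations (\ref{eq: def G})--(\ref{eq: def F})); any proof that yields the form $g^2 \le c_1(L_0 + L_1 g)(f(\bw)-f^*)$ with a small constant $c_1$ suffices.
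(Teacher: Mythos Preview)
Your proof is correct and follows essentially the same approach as the paper: take a virtual gradient step, apply Lemma~\ref{lem: expansion}, and compare the resulting function value with $f^*$. The only difference is the step size---the paper uses $\eta = \tfrac{1}{2L}$ with $L = 3L_0 + 4L_1\|\nabla f(\bw)\|$ (landing exactly on the stated constants), while your more aggressive choice $\eta = 1/(L_0+L_1 g)$ yields the sharper inequality $g^2 \le 2(L_0+L_1 g)(f(\bw)-f^*)$, from which the stated bound follows a fortiori.
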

\begin{proof}
	Denot $L:=3L_0+4L_1 \norm{\nabla f(\bw)}$. Let $\boldsymbol{v}:=\bw-\frac{1}{2L}\nabla f(\bw)$. Then one can easily see
	\begin{align*}
		\norm{\boldsymbol{v}-\bw} \le \frac{1}{2L_1},
	\end{align*}
    and thus Lemma \ref{lem: expansion} can be applied. Therefore, we have
	\begin{align*}
		f^*-f(\bw)\le f(\boldsymbol{v})-f(\bw)\le \langle{\nabla f(\bw)},{\boldsymbol{v}-\bw}\rangle+\frac{L}{2}\norm{\boldsymbol{v}-\bw}^2=-\frac{3 L \norm{\nabla f(\bw)}^2}{8}\le -\frac{L \norm{\nabla f(\bw)}^2}{3}.
	\end{align*}
	The proof is completed.
\end{proof}

The following lemma bounds the update of AdamS:
\begin{lemma}
\label{lem: bounded update}
For any $t$, let $\bw_t$ be the parameter of AdamS after the $t$-th iteration. Then,
\begin{equation*}
    \Vert \bw_{t+1} - \bw_t \Vert \le \eta_t\sqrt{d} \max\{\frac{\beta_1}{\sqrt{\beta_2}}, \frac{1- \beta_1}{\sqrt{1- \beta_2}}\}.
\end{equation*}
Therefore, under the hyperparameter selection of Theorem \ref{thm: parameter_agnostic}, we have $\Vert \bw_{t+1} - \bw_t \Vert = \mathcal{O}(\frac{1}{\sqrt T})$.
\end{lemma}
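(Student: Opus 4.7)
\emph{Proof plan.} The plan is to bound the AdamS step coordinate-wise and then convert this to an $\ell_2$ bound via $\|\cdot\|_2\le\sqrt{d}\,\|\cdot\|_\infty$. Focusing on the momentum/denominator contribution to the update (the weight-decay term $-\eta_t\lambda\bw_{t-1}$ is controlled separately and is small under the hyperparameter regime of Theorem~\ref{thm: parameter_agnostic}), each coordinate of $\bw_{t+1}-\bw_t$ equals
\begin{equation*}
-\eta_t\cdot\frac{[\bom_t]_i}{\sqrt{[\bnu_t]_i}+\epsilon}
=-\eta_t\cdot\frac{\beta_1[\bom_{t-1}]_i+(1-\beta_1)[\bg_t]_i}{\sqrt{\beta_2[\bom_{t-1}]_i^{2}+(1-\beta_2)[\bg_t]_i^{2}}+\epsilon}.
\end{equation*}
So the coordinate-wise task reduces to controlling the scalar ratio $R(x,y)\overset{\triangle}{=}|\beta_1 x+(1-\beta_1)y|/\sqrt{\beta_2 x^2+(1-\beta_2)y^2}$ with $x=[\bom_{t-1}]_i$ and $y=[\bg_t]_i$; dropping $\epsilon$ only strengthens the bound.

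The key manipulation is a weighted Cauchy--Schwarz rewriting of the numerator to match the denominator:
\begin{equation*}
\beta_1 x+(1-\beta_1)y
=\tfrac{\beta_1}{\sqrt{\beta_2}}\cdot\sqrt{\beta_2}\,x+\tfrac{1-\beta_1}{\sqrt{1-\beta_2}}\cdot\sqrt{1-\beta_2}\,y.
\end{equation*}
Combined with the triangle inequality and the pointwise bounds $\sqrt{\beta_2 x^2+(1-\beta_2)y^2}\ge \sqrt{\beta_2}\,|x|$ and $\sqrt{\beta_2 x^2+(1-\beta_2)y^2}\ge \sqrt{1-\beta_2}\,|y|$, this yields
\begin{equation*}
R(x,y)\le \max\Bigl\{\tfrac{\beta_1}{\sqrt{\beta_2}},\,\tfrac{1-\beta_1}{\sqrt{1-\beta_2}}\Bigr\}
\end{equation*}
uniformly in the coordinate $i$. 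Summing squares over the $d$ coordinates and taking square roots then gives $\|\bw_{t+1}-\bw_t\|\le \eta_t\sqrt{d}\cdot\max\{\beta_1/\sqrt{\beta_2},(1-\beta_1)/\sqrt{1-\beta_2}\}$, which is the stated inequality.

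For the $\mathcal{O}(1/\sqrt{T})$ conclusion, I would plug in the hyperparameter choices of Theorem~\ref{thm: parameter_agnostic}: $\eta_t=\tilde{\mathcal{O}}(1/\sqrt{T})$, $\beta_1=1-\tilde{\Theta}(1/\sqrt{T})$, $\beta_2=1-\tilde{\Theta}(1/T)$. Then $\beta_1/\sqrt{\beta_2}=\mathcal{O}(1)$ trivially, while $(1-\beta_1)/\sqrt{1-\beta_2}=\tilde\Theta(1/\sqrt T)/\tilde\Theta(1/\sqrt T)=\mathcal{O}(1)$, so the maximum stays bounded and the product with $\eta_t$ is $\tilde{\mathcal{O}}(1/\sqrt T)$, absorbing the fixed dimension factor $\sqrt d$ into the $\tilde{\mathcal{O}}$.

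The main obstacle is the scalar inequality $R(x,y)\le\max\{\beta_1/\sqrt{\beta_2},(1-\beta_1)/\sqrt{1-\beta_2}\}$: a direct application of Cauchy--Schwarz naturally produces the $\ell_2$-type quantity $\sqrt{\beta_1^2/\beta_2+(1-\beta_1)^2/(1-\beta_2)}$, which is larger than the claimed maximum by up to a $\sqrt{2}$ factor. Recovering the sharper max either requires a case split on which of $\sqrt{\beta_2}|x|$ and $\sqrt{1-\beta_2}|y|$ dominates the denominator, or, failing that, the extra constant factor is harmless and can be absorbed into the hidden constants of the downstream convergence analysis without altering the $\tilde{\mathcal{O}}(1/\sqrt{T})$ rate.
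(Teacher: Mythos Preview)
Your approach—bound the coordinate-wise ratio $R(x,y)$ and pass to $\ell_2$ via $\sqrt d$—is exactly the paper's. The paper's proof is extremely terse: it writes out the update, then invokes ``Young's inequality'' to claim $\bom_t^{\odot 2}\le \beta_1^2\,\bom_{t-1}^{\odot 2}+(1-\beta_1)^2\,\bg_t^{\odot 2}$ coordinate-wise and stops. That displayed inequality is in fact false (take $\bom_{t-1}=\bg_t$: the left side is $1$ while the right is $\beta_1^2+(1-\beta_1)^2<1$), so your derivation is more careful than the paper's.

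What actually holds is precisely the Cauchy--Schwarz bound you wrote down, $R(x,y)\le\sqrt{\beta_1^2/\beta_2+(1-\beta_1)^2/(1-\beta_2)}$, and this is tight. In particular, the $\max$ appearing in the lemma statement is \emph{not} an upper bound for all $(\beta_1,\beta_2)$: with $\beta_1=\beta_2=\tfrac12$ and $x=y$, one gets $R=1$ while $\max\{\beta_1/\sqrt{\beta_2},(1-\beta_1)/\sqrt{1-\beta_2}\}=1/\sqrt{2}$. So the ``case split'' you contemplated cannot rescue the $\max$ constant; your second option—accept the Cauchy--Schwarz quantity (equivalently, the $\max$ inflated by at most $\sqrt{2}$) and absorb it—is the correct fix, and it is harmless since only the $\mathcal{O}(1)$ order of the constant feeds into the $\mathcal{O}(1/\sqrt{T})$ conclusion. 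Your hyperparameter substitution for that conclusion is correct. One minor point: the paper's theoretical analysis silently drops the weight-decay term, so you need not account for it here.
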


\begin{proof}
    We have 
    \begin{equation*}
        \Vert \bw_{t+1} - \bw_t \Vert = \eta_t \left\Vert \frac{1}{\sqrt{\bnu_t} + \varepsilon} \odot \bom_t \right\Vert = \eta_t \left\Vert \frac{1}{\sqrt{\beta_2\bom_{t-1}^{\odot 2} + (1-\beta_2) \bg_t^{\odot2}} + \varepsilon} \odot \bom_t \right\Vert. 
    \end{equation*}

    On the other hand, by Young's inequality, we have that coordinate-wisely
    \begin{equation*}
         \bom_t^{\odot 2} \le \beta_1^2  \bom_{t-1}^{\odot 2} + (1-\beta_1)^2 \bg_t^{\odot 2}.
    \end{equation*}

    The proof is completed.
\end{proof}

The following lemma bounds the adaptive conditioner $\bnu_t$.
\begin{lemma}
\label{lem: bound lr}
    If $t < \tau$, we have the $i$-th coordinate $\bnu_{t, i}$ of $\bnu_t$ satisfies
    \begin{equation*}
        0 \le \sqrt{\bnu_{t, i}} \le G+\sigma.
    \end{equation*}
\end{lemma}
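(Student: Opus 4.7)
The plan is to bound the two summands of $\bnu_{t,i} = \beta_2 \bom_{t-1,i}^2 + (1-\beta_2)\bg_{t,i}^2$ separately in absolute value by $G+\sigma$, and then use convexity of the square to conclude.

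The lower bound $\sqrt{\bnu_{t,i}} \ge 0$ is immediate, since $\bnu_{t,i}$ is a nonnegative convex combination of two squares. For the upper bound, first observe that for any $s < \tau$, the definition of $\tau$ in Eq.\ (\ref{eq: def stop time}) yields $f(\bw_s) - f^* \le F$ and $\|\nabla f(\bw_s) - \bg_s\| \le \sigma$. Plugging $F = G^2/(3(3L_0+4L_1 G))$ from Eq.\ (\ref{eq: def F}) into Lemma~\ref{lem:reversePL} and solving the resulting quadratic in $\|\nabla f(\bw_s)\|$ gives $\|\nabla f(\bw_s)\| \le G$, and then the triangle inequality yields $\|\bg_s\| \le G + \sigma$, which in particular gives the coordinatewise bound $|\bg_{s,i}| \le G+\sigma$ for every $s < \tau$.

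Next, since we are assuming $t < \tau$, the same bound applies to every $s \in \{1,\dots,t\}$. Unrolling the momentum recursion $\bom_s = \beta_1 \bom_{s-1} + (1-\beta_1)\bg_s$ with $\bom_0 = 0$ gives the convex-combination expression
\begin{equation*}
    \bom_{t-1,i} = \sum_{s=1}^{t-1}(1-\beta_1)\beta_1^{t-1-s}\bg_{s,i},
\end{equation*}
whose weights sum to $1 - \beta_1^{t-1} \le 1$. Together with the uniform bound on $|\bg_{s,i}|$, this yields $|\bom_{t-1,i}| \le G + \sigma$.

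Combining, we obtain
\begin{equation*}
    \bnu_{t,i} = \beta_2\bom_{t-1,i}^2 + (1-\beta_2)\bg_{t,i}^2 \le \beta_2(G+\sigma)^2 + (1-\beta_2)(G+\sigma)^2 = (G+\sigma)^2,
\end{equation*}
so $\sqrt{\bnu_{t,i}} \le G+\sigma$, as desired. The only slightly non-routine step is translating the function-value bound $f(\bw_s)-f^* \le F$ into the gradient bound $\|\nabla f(\bw_s)\| \le G$; everything else is a direct triangle-inequality / convex-combination argument, and no inductive hypothesis beyond what $t < \tau$ already supplies is needed.
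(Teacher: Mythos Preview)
Your proof is correct and follows essentially the same approach as the paper: bound each $|\bg_{s,i}|$ by $G+\sigma$ using the definition of $\tau$ (together with Lemma~\ref{lem:reversePL} to convert the function-value bound into the gradient bound $\|\nabla f(\bw_s)\|\le G$), unroll the momentum as a sub-convex combination of the $\bg_s$ to get $|\bom_{t-1,i}|\le G+\sigma$, and then combine via $\bnu_{t,i}=\beta_2\bom_{t-1,i}^2+(1-\beta_2)\bg_{t,i}^2$. The paper's proof is terser (it points to the earlier remark that $t<\tau$ implies $\|\nabla f(\bw_t)\|\le G$ rather than redoing the quadratic), but the logic is the same.
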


\begin{proof}
    The first inequality is obvious.
    
    For the second inequality, one can easily see that $\bg_{t, i}$ satisfies the same inequality according to the definition of $\tau$. According to the definition of $\bnu_t$, we have
    \begin{equation*}
        \bnu_{t,i} = (1-\btwo) \bg_{t,i}^2 +\btwo ((1-\bone)\sum_{s=0}^{t-1} \bone^{t-1-s}\bg_{s, i})^2.
    \end{equation*}

    Applying the estimation of $\bg_{s, i}$ completes the proof.
\end{proof}

The following lemma provides a rough bound of the gap between $\nabla f(\bw_t)$ and $\bom_t$.

\begin{lemma}
	\label{lem:moment_bound} Let $\Delta_t = \bom_t -\nabla f(\bw_t)$. If $t\le \tau$, we have $\Vert {\Delta_t} \Vert \le2\sigma$.
\end{lemma}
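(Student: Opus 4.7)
The plan is to unroll the momentum recursion and split the error $\Delta_t$ into a \emph{noise} part and a \emph{drift} part, each of which will be bounded separately by $\sigma$. Since $\bom_0=\mathbf 0$, iterating the update $\bom_t=\beta_1\bom_{t-1}+(1-\beta_1)\bg_t$ gives
\begin{equation*}
    \bom_t = (1-\beta_1)\sum_{s=1}^{t}\beta_1^{t-s}\bg_s, \qquad \sum_{s=1}^{t}(1-\beta_1)\beta_1^{t-s} = 1-\beta_1^t.
\end{equation*}
Writing $\nabla f(\bw_t) = (1-\beta_1^t)\nabla f(\bw_t) + \beta_1^t\nabla f(\bw_t)$ and inserting inside the sum, I would decompose
\begin{equation*}
    \Delta_t = \underbrace{(1-\beta_1)\sum_{s=1}^{t}\beta_1^{t-s}\bigl(\bg_s-\nabla f(\bw_s)\bigr)}_{=:\,\Delta_t^{\mathrm{noise}}} + \underbrace{(1-\beta_1)\sum_{s=1}^{t}\beta_1^{t-s}\bigl(\nabla f(\bw_s)-\nabla f(\bw_t)\bigr)}_{=:\,\Delta_t^{\mathrm{drift}}} - \beta_1^t\nabla f(\bw_t).
\end{equation*}

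For $\Delta_t^{\mathrm{noise}}$, I use the definition of the stopping time: for every $s\le t\le\tau$, $\|\bg_s-\nabla f(\bw_s)\|\le\sigma$. Pulling the norm inside and using $(1-\beta_1)\sum_{s=1}^{t}\beta_1^{t-s}\le 1$ immediately gives $\|\Delta_t^{\mathrm{noise}}\|\le\sigma$. For the last stray term $\beta_1^t\nabla f(\bw_t)$, since $t\le\tau$ we have $\|\nabla f(\bw_t)\|\le G$ by Lemma \ref{lem:reversePL} and the definition of $\tau$; as $\beta_1$ is bounded away from $1$ on the scale $1/\sqrt T$, this term is easily absorbed into a constant multiple of $\sigma$ via the $L_0/L_1$ branches in the definition of $\sigma$.

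The substantive work is the drift term. Here I would use $(L_0,L_1)$-smoothness together with the bound $\|\nabla f(\bw_r)\|\le G$ that holds for all $r\le\tau$. Applied telescopically, this gives $\|\nabla f(\bw_s)-\nabla f(\bw_t)\|\le(L_0+L_1 G)\sum_{r=s}^{t-1}\|\bw_{r+1}-\bw_r\|$ (where I also need $\|\bw_{r+1}-\bw_r\|\le 1/L_1$ so that Assumption~\ref{assum: objective} actually applies on each segment, which follows from Lemma~\ref{lem: bounded update} and the $\eta_t=\tilde O(1/\sqrt T)$ schedule). Plugging in Lemma~\ref{lem: bounded update} bounds each step by $\eta_t\sqrt{d}\max\{\beta_1/\sqrt{\beta_2},(1-\beta_1)/\sqrt{1-\beta_2}\}$. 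Substituting into the drift sum and swapping the order of summation yields
\begin{equation*}
    \bigl\|\Delta_t^{\mathrm{drift}}\bigr\| \le (L_0+L_1 G)\,\eta_t\sqrt{d}\,\max\Bigl\{\tfrac{\beta_1}{\sqrt{\beta_2}},\tfrac{1-\beta_1}{\sqrt{1-\beta_2}}\Bigr\}\cdot(1-\beta_1)\sum_{s=1}^{t}\beta_1^{t-s}(t-s).
\end{equation*}
The geometric-arithmetic sum $(1-\beta_1)\sum_{k\ge 0}\beta_1^{k}k = \beta_1/(1-\beta_1)$ converts the bound into a multiple of $(L_0+L_1 G)\,\eta_t/(1-\beta_1)\cdot\max\{\beta_1/\sqrt{\beta_2},(1-\beta_1)/\sqrt{1-\beta_2}\}$, which is precisely the second branch in the definition of $\sigma$ in Eq.~(\ref{eq: def sigma}) (with $L$ chosen as the composite constant $L_0+L_1G$). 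Thus $\|\Delta_t^{\mathrm{drift}}\|\le\sigma$.

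Combining the three contributions via the triangle inequality gives $\|\Delta_t\|\le\sigma+\sigma=2\sigma$, as desired. The main obstacle I expect is the bookkeeping around the drift term: one has to be careful that (i) consecutive iterates stay within $1/L_1$ so Assumption~\ref{assum: objective} can be invoked pointwise, (ii) the bound $\|\nabla f(\bw_r)\|\le G$ holds for every index in the telescoping chain (which is exactly why the stopping time $\tau$ was defined the way it was), and (iii) the constants produced by the geometric-arithmetic sum match the form of $\sigma$ in Eq.~(\ref{eq: def sigma}) under the stated schedule $1-\beta_1=\tilde\Theta(1/\sqrt T)$, $1-\beta_2=\tilde\Theta(1/T)$, $\eta_t=\tilde O(1/\sqrt T)$.
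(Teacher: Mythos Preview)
Your unrolling strategy is genuinely different from the paper's, which argues by a one-step induction on $t$: from the recursion
\[
\Delta_t=\beta_1\bigl(\Delta_{t-1}+\nabla f(\bw_{t-1})-\nabla f(\bw_t)\bigr)+(1-\beta_1)\bigl(\bg_t-\nabla f(\bw_t)\bigr),
\]
the paper bounds $\|\Delta_t\|\le\beta_1\cdot 2\sigma+(1-\beta_1)\sigma+\|\nabla f(\bw_{t-1})-\nabla f(\bw_t)\|$, and the single-step drift is at most $(1-\beta_1)\sigma$ by the second branch of $\sigma$ in Eq.~(\ref{eq: def sigma}), closing the induction at exactly $2\sigma$. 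No geometric--arithmetic sum is ever needed, and no initialization bias accumulates.

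There is a genuine gap in your argument at the ``stray term'' $\beta_1^t\nabla f(\bw_t)$. The claim that it is ``easily absorbed into a constant multiple of $\sigma$'' is false for small $t$: since $1-\beta_1=\tilde\Theta(1/\sqrt T)$, for $t=O(1)$ one has $\beta_1^t\approx 1$ and the term is essentially $\|\nabla f(\bw_t)\|\le G$. Nothing in Eqs.~(\ref{eq: def sigma})--(\ref{eq: def G}) forces $G\lesssim\sigma$; for example the branch $72L_1(f(\bw_1)-f^*)$ in $G$ can be made arbitrarily large while $\sigma$ stays fixed. Indeed, with $\bom_0=\mathbf 0$ taken literally one gets $\Delta_1=(1-\beta_1)\bg_1-\nabla f(\bw_1)$, whose norm already violates the $2\sigma$ bound whenever $\|\nabla f(\bw_1)\|$ is moderately large---so the lemma as stated is not even true for this initialization. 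The paper's base case silently takes $\Delta_1=\bg_1-\nabla f(\bw_1)$, i.e.\ effectively $\bom_1=\bg_1$. Under that convention your unrolled weights sum to one, the stray term vanishes, and your noise and drift bounds then combine correctly (the drift sum picks up an extra $\beta_1^{t-1}(t-1)$ contribution, so you land at a constant slightly above $2$, but this is harmless downstream). The fix is therefore not to absorb the stray term but to eliminate it at the base of the recursion, exactly as the paper's induction does implicitly.
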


\begin{proof} We prove this claim by induction. First, note that for $t=1$, we have $$\norm{\Delta_1}=\norm{\bg_1
-\nabla f(\bw_1)}\le\sigma\le 2\sigma.$$
	Now suppose $\norm{\Delta_t}\le 2\sigma$ for some $2\le t\le\tau$.
	According to the update rule of $\bom_t$, we have
	\begin{align*}
		\Delta_t =& \bone(\Delta_{t-1}+\nabla f(\bw_{t-1})-\nabla f(\bw_t))+(1-\bone)(\bg_t-\nabla f(\bw_t)),
	\end{align*}
	which implies 
	\begin{align*}
		\norm{\Delta_t}\le (1+\bone)\sigma + \norm{ \nabla f(\bw_{t-1})-\nabla f(\bw_t) } \le (1+\bone)\sigma +L\eta_t\max\{\frac{\beta_1}{\sqrt{\beta_2}}, \frac{1- \beta_1}{\sqrt{1- \beta_2}}\}\sqrt{d}\le 2\sigma,
	\end{align*}
    where in the second inequality, we use $\norm{\bw_{t-1} - \bw_t}\le \frac{1}{L_1}$ when $T$ is large enough and thus Assumption \ref{assum: objective} can be applied, and Lemma \ref{lem: bounded update}, and in the last inequality, we use the definition of $\sigma$ (Eq. \ref{eq: def sigma}).

As $(1-\bone)\sigma = \Theta(\log T/\sqrt{T})$, which is large than $\mathcal{O}(1/\sqrt T)$ when $T$ is large enough. The proof is completed.
\end{proof}

The following lemma bounds the gap between $\nabla f(\bw_t)$ and $\bom_t$ recursively.

\begin{lemma}
	\label{lem:sum_moment_error}
	Let $\Delta_t = \bom_t -\nabla f(\bw_t)$. With probability $1-\delta$,
	\begin{align*}
	 \sum_{t=1}^{\tau-1}\left(\frac{4(G+\sigma+\varepsilon)}{\varepsilon^2}  \norm{\Delta_{t}}^2 - \norm{\nabla f(\bw_{t})}^2\right)  \le& 4\sigma^2((1-\bone)T+1)+20R^2\sigma^2 \sqrt{2\sum_{t=2}^T\log(1/\delta)}
     \\
     =& \mathcal{O}(\sigma^2\sqrt{T\log(1/\delta)}) .
	\end{align*}
\end{lemma}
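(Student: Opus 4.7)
The plan is to convert a one-step recursion for $\|\Delta_t\|^2$ into a summed inequality via telescoping, and then to handle the stochastic noise terms via sub-Gaussian martingale concentration. Starting from the identity used inside the proof of Lemma~\ref{lem:moment_bound}, $\Delta_t = \beta_1 \Delta_{t-1} + \beta_1 E_t + (1-\beta_1)\xi_t$ with $E_t := \nabla f(\bw_{t-1}) - \nabla f(\bw_t)$ and $\xi_t := \bg_t - \nabla f(\bw_t)$, I would square both sides and apply Young's inequality with parameter $(1-\beta_1)/\beta_1$ on the cross term $2\beta_1\langle \Delta_{t-1}, \beta_1 E_t + (1-\beta_1)\xi_t\rangle$ so that the outer $\beta_1^2\|\Delta_{t-1}\|^2$ collapses cleanly into $\beta_1\|\Delta_{t-1}\|^2$. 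When expanding the remaining squared norm, I would Young-bound only the $\langle E_t,\xi_t\rangle$ piece and keep the pure martingale-difference contribution $2\beta_1(1-\beta_1)\langle \Delta_{t-1},\xi_t\rangle$ explicit, so that a sharp concentration bound can be applied to it later.

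Next I control $\|E_t\|^2$ using $(L_0,L_1)$-smoothness and the update rule. Since $t-1<\tau$ forces $\|\nabla f(\bw_{t-1})\|\le G$ via Lemma~\ref{lem:reversePL} together with the definition of $\tau$, Assumption~\ref{assum: objective} gives $\|E_t\|\le L\|\bw_t-\bw_{t-1}\|$ with $L := L_0+L_1G$. The $\varepsilon$-cushion in \nameA{}'s denominator then yields $\|\bw_t-\bw_{t-1}\|^2 \le \eta_{t-1}^2\|\bom_{t-1}\|^2/\varepsilon^2 \le 2\eta_{t-1}^2(\|\nabla f(\bw_{t-1})\|^2+\|\Delta_{t-1}\|^2)/\varepsilon^2$, the last step using $\bom_{t-1}=\nabla f(\bw_{t-1})+\Delta_{t-1}$. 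Invoking the hyperparameter constraint $(1-\beta_1)/\eta_t\ge C$ with $C^2 = 4L^2(G+\sigma+\varepsilon)/\varepsilon^4$ collapses the coefficient on both $\|\nabla f(\bw_{t-1})\|^2$ and $\|\Delta_{t-1}\|^2$ (after the outer $1/(1-\beta_1)$ factor) down to $\tfrac{(1-\beta_1)\varepsilon^2}{4(G+\sigma+\varepsilon)}$, which is precisely the reciprocal of the target scaling factor in the claim.

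Summing the recursion $\|\Delta_t\|^2 \le \beta_1\|\Delta_{t-1}\|^2 + \tfrac{(1-\beta_1)\varepsilon^2}{4(G+\sigma+\varepsilon)}(\|\nabla f(\bw_{t-1})\|^2+\|\Delta_{t-1}\|^2) + 2(1-\beta_1)\|\xi_t\|^2 + M_t$ via the geometric-telescoping identity, dividing by $1-\beta_1$, and moving the self-referential $\|\Delta_{t-1}\|^2$ term to the left produces (after multiplying through by $\tfrac{4(G+\sigma+\varepsilon)}{\varepsilon^2}$) exactly the LHS of the target statement. On $\{t<\tau\}$ we have $\|\xi_t\|\le\sigma$ deterministically by definition of $\tau$, so $\sum_t(1-\beta_1)\|\xi_t\|^2 \le (1-\beta_1)T\sigma^2$; combined with the initial contribution $\|\Delta_1\|^2\le\sigma^2$ from the telescoping, this reproduces the $4\sigma^2((1-\beta_1)T+1)$ piece of the RHS. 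The martingale remainder $\sum_t M_t$ with $M_t = 2\beta_1(1-\beta_1)\langle \Delta_{t-1},\xi_t\rangle$ is handled by sub-Gaussian martingale concentration: Lemma~\ref{lem:moment_bound} gives the uniform bound $\|\Delta_{t-1}\|\le 2\sigma$ on $\{t\le\tau\}$, so each increment is conditionally sub-Gaussian of parameter $O(\sigma R(1-\beta_1))$, and Azuma--Hoeffding applied to the stopped process $t\wedge\tau$ yields a $\sqrt{T\log(1/\delta)}$-type tail that, after the $1/(1-\beta_1)$ and $\tfrac{4(G+\sigma+\varepsilon)}{\varepsilon^2}$ rescalings, matches the claimed $20 R^2\sigma^2\sqrt{2T\log(1/\delta)}$ bound.

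The main obstacle is the arithmetic bookkeeping: the Young-inequality constants and the hyperparameter constraint on $C$ in Eq.~(\ref{eq: def C}) must line up exactly so that the aggregate coefficient in front of $\sum\|\Delta_t\|^2$ after all rearrangements equals precisely $\tfrac{4(G+\sigma+\varepsilon)}{\varepsilon^2}$, with no slack absorbed elsewhere. A secondary subtlety is that the $\|\Delta_{t-1}\|\le 2\sigma$ bound from Lemma~\ref{lem:moment_bound} is only valid on $\{t\le\tau\}$, so the martingale concentration must be applied to the \emph{stopped} process $t\wedge\tau$; this is standardly justified because both the sub-Gaussian martingale-difference structure and the conditional parameter bound are preserved under stopping, allowing Azuma--Hoeffding to go through verbatim.
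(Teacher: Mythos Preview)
Your proposal follows essentially the same route as the paper: derive the recursion $\Delta_t=\beta_1(\Delta_{t-1}+E_t)+(1-\beta_1)\xi_t$, apply Young's inequality to isolate a $\tfrac{1+\beta_1}{2}\|\Delta_{t-1}\|^2$ (you aim for $\beta_1\|\Delta_{t-1}\|^2$, which is equivalent up to constants) and a $\tfrac{1}{1-\beta_1}\|E_t\|^2$ term, bound $\|E_t\|$ via $(L_0,L_1)$-smoothness and the $\varepsilon$-cushioned step size, invoke the constraint $(1-\beta_1)/\eta\ge C$ to collapse the coefficient, telescope, and finish with Azuma--Hoeffding on the stopped martingale using $\|\Delta_{t-1}\|\le 2\sigma$ from Lemma~\ref{lem:moment_bound}. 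The only substantive difference is that the paper keeps the full cross term $\langle\Delta_{t-1}+E_t,\,\xi_t\rangle$ as the martingale increment (hence the predictable factor is bounded by $3\sigma$), whereas you Young-bound the $\langle E_t,\xi_t\rangle$ piece and keep only $\langle\Delta_{t-1},\xi_t\rangle$; both choices work and yield the same order.

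One wording issue to fix: you say you ``apply Young's inequality \ldots\ on the cross term $2\beta_1\langle\Delta_{t-1},\,\beta_1 E_t+(1-\beta_1)\xi_t\rangle$'' and then ``keep the pure martingale-difference contribution $2\beta_1(1-\beta_1)\langle\Delta_{t-1},\xi_t\rangle$ explicit.'' These are mutually exclusive---once you Young-bound the full cross term, the $\langle\Delta_{t-1},\xi_t\rangle$ part is already absorbed. What you actually need (and what your final recursion reflects) is to apply Young only to $2\beta_1^2\langle\Delta_{t-1},E_t\rangle$, leaving $2\beta_1(1-\beta_1)\langle\Delta_{t-1},\xi_t\rangle$ untouched. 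With that correction, the derivation goes through and matches the paper's argument.
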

\begin{proof}
	According to the definition of $\bom_t$, we have
	\begin{align}
		\Delta_t =& \bone(\Delta_{t-1}+\nabla f(\bw_{t-1})-\nabla f(\bw_t))+(1-\bone)(g_t-\nabla f(\bw_t)).\label{eq:epsilon_t_recursive}
	\end{align}
	As $T$ is large enough, by Lemma~\ref{lem: bounded update}, we have $\Vert {\bw_{t}-\bw_{t-1}} \Vert \le \frac{1}{L_1}$. Therefore by Assumption \ref{assum: objective}, 
	\begin{align}
		\label{eq:bd1}
		\Vert{\nabla f(\bw_{t-1})-\nabla f(\bw_t)} \Vert \le L\Vert{\bw_t-\bw_{t-1}} \Vert
		\le  \frac{\eta L}{\varepsilon}\Vert{\bom_{t-1}}\Vert
		\le  \frac{\eta L}{\varepsilon}\left(\Vert{\nabla f(\bw_{t-1})}\Vert+\Vert{\Delta_{t-1}}\Vert\right),
	\end{align}
	Therefore,
	\begin{align*}
		&\Vert(\Delta_{t-1}+\nabla f(\bw_{t-1})-\nabla f(\bw_t))\Vert^2
        \\
		\le & \frac{1}{\bone}\Vert{\Delta_{t-1}} \Vert^2+\frac{1}{1-{\bone}}\Vert{\nabla f(\bw_{t-1})-\nabla f(\bw_t)}\Vert^2
        \\
		\le & \frac{1}{{\bone}}\Vert{\Delta_{t-1}} \Vert^2+\frac{1}{1-{\bone}} \frac{4\eta^2L^2}{\varepsilon^2}(\Vert{\nabla f(\bw_{t-1})}\Vert^2 + \Vert \Delta_{t-1} \Vert^2)
	\end{align*}
	where the first inequality uses Young's inequality, and the second inequality uses Eq. (\ref{eq:bd1}).

    Due to our choice of $\bone$ and $\eta$, we have $\frac{\bone^2}{1-\bone} \frac{4\eta^2L^2}{\varepsilon^2} = \mathcal{O}(1/\sqrt{T})$, which is smaller than $1-\frac{1}{2}(1-\bone)$ when $T$ is large enough. Therefore,
    \begin{align*}
         \bone^2\Vert(\Delta_{t-1}+\nabla f(\bw_{t-1})-\nabla f(\bw_t))\Vert^2
        \le  \left(\frac{1}{2}+\frac{\beta}{2} \right) \Vert \Delta_t \Vert^2 + \frac{\bone^2}{1-\bone} \frac{4\eta^2L^2}{\varepsilon^2} \Vert \nabla f(\bw_{t-1}) \Vert^2.
    \end{align*}

Therefore, applying the above inequality back to Eq. (\ref{eq:epsilon_t_recursive}), we have if $t\le \tau$,
	\begin{align}
    \nonumber
		&\Vert{\Delta_t}\Vert^2
        \\
        \nonumber
        = &\bone^2 \Vert{\Delta_{t-1}+\nabla f(\bw_{t-1})-\nabla f(\bw_t)}\Vert^2 +2\bone(1-\bone)\langle\Delta_{t-1}+\nabla f(\bw_{t-1})-\nabla f(\bw_t), g_t-\nabla f(\bw_t)\rangle
        \\
        \nonumber
        &+ (1-\bone)^2\Vert{g_t-\nabla f(\bw_t)} \Vert^2
        \\
        \nonumber
		\le & \frac{1+\bone}{2}\Vert{\Delta_{t-1}}\Vert^2+\frac{\bone^2}{1-\bone} \frac{4\eta^2L^2}{\varepsilon^2}\Vert{\nabla f(\bw_{t-1})}\Vert^2 + (1-\bone)^2\Vert{g_t-\nabla f(\bw_t)} \Vert^2
        \\
        \label{eq: delta_est_1}
        &+2\bone(1-\bone)\langle\Delta_{t-1}+\nabla f(\bw_{t-1})-\nabla f(\bw_t), g_t-\nabla f(\bw_t)\rangle,
	\end{align}
    where in the last equation we use Young's inequality.

	On the other hand, note that
	\begin{align*}
		&\bone(1-\bone)\sum_{t=2}^\tau \langle\Delta_{t-1}+\nabla f(\bw_{t-1})-\nabla f(\bw_t), g_t-\nabla f(\bw_t)\rangle
        \\
        =& \bone(1-\bone)\sum_{t=2}^T  1_{\tau\ge t}\langle\Delta_{t-1}+\nabla f(\bw_{t-1})-\nabla f(\bw_t), g_t-\nabla f(\bw_t)\rangle.
	\end{align*}
As $\E^{|\mathcal{F}_t}[1_{\tau\ge t}\langle\Delta_{t-1}+\nabla f(\bw_{t-1})-\nabla f(\bw_t), g_t-\nabla f(\bw_t)\rangle]=0$, we have that
\begin{equation*}
    V_t \overset{\triangle}{=} 1_{\tau\ge t}\langle\Delta_{t-1}+\nabla f(\bw_{t-1})-\nabla f(\bw_t), g_t-\nabla f(\bw_t)\rangle
\end{equation*}
is a martingale difference sequence. Also, according to Lemma \ref{lem:moment_bound}, we have when $T$ is large enough, $\norm{\Delta_{t-1}+\nabla f(\bw_{t-1})-\nabla f(\bw_t)} \le 3\sigma$, thus by Assumption \ref{assum: noise}, we have $V_t$ is subgaussian with constant $3\sigma R$. Then by the Azuma-Hoeffding inequality, we have with probability at least $1-\delta/2$,
	\begin{align*}
		\left\vert\sum_{t=2}^T V_t\right\vert\le 5R^2\sigma^2 \sqrt{2\sum_{t=2}^T\log(1/\delta)}.
	\end{align*}

Also, due to Assumption \ref{assum: noise}, we have with probability at least $1-\delta/2T$,
\begin{equation*}
    \Vert{g_t-\nabla f(\bw_t)} \Vert^2 \le \sqrt{2R^2\log\frac{T}{\delta}} \le \sigma.
\end{equation*}

Applying the above inequalities back to Eq. (\ref{eq: delta_est_1}), 
		\begin{align*}
			\frac{1-\beta_1}{2}\norm{\Delta_{t-1}}^2\le \frac{1-\beta_1}{2}\norm{\Delta_{t-1}}^2\le& \norm{\Delta_{t-1}}^2-\norm{\Delta_t}^2+\frac{\bone^2}{1-\bone} \frac{4\eta^2L^2}{\varepsilon^2} \Vert \nabla f(\bw_{t-1}) \Vert^2  \\&+  (1-\bone)^2 \Vert{g_t-\nabla f(\bw_t)} \Vert^2+2\bone(1-\bone)V_t.
		\end{align*}
		Taking a summation over $t$ from $2$ to $\tau$, we have with probability at least $1-\delta$,
		\begin{align*}
       & \frac{1-\bone}{2}\sum_{t=1}^{\tau-1}\left(\norm{\Delta_{t}}^2 -   \frac{\varepsilon^2}{4(G+\sigma+\varepsilon)}\norm{\nabla f(\bw_{t})}^2\right)
        \\
			\le&\sum_{t=2}^{\tau}\frac{1-\bone}{2}\norm{\Delta_{t-1}}^2-\frac{\bone^2}{1-\bone} \frac{4\eta^2L^2}{\varepsilon^2}\norm{\nabla f(\bw_{t-1})}^2
            \\
            \le & \norm{\Delta_{1}}^2-\norm{\Delta_{\tau}}^2+(1-\bone)^2 \sigma^2T+10(1-\bone)R^2\sigma^2 \sqrt{2\sum_{t=2}^T\log(1/\delta)}
            \\
            \le & 2\sigma^2((1-\bone)^2T+1)+10(1-\bone)R^2\sigma^2 \sqrt{2\sum_{t=2}^T\log(1/\delta)},
		\end{align*}
	where the first inequality is due to the assumption in Theorem \ref{thm: parameter_agnostic} that $\frac{\eta}{1-\bone} \ge C$, where $C$ is defined in Eq. (\ref{eq: def C}).

        The proof is completed.
	\end{proof}

\subsection{Proof of the full theorem}

\begin{proof}[Proof of Theorem~\ref{thm: parameter_agnostic}]
Recall that by Lemma \ref{lem: bounded update}
\begin{equation*}
    \Vert \bw_{t+1} - \bw_t \Vert  = \mathcal{O}(\frac{1}{\sqrt{T}}).
\end{equation*}
When $T$ is large enough, $\bw_t$ and $\bw_{t+1}$ will fulfill the requirement of Lemma \ref{lem: expansion}, which gives
	\begin{align*}
		f(\bw_{t+1})-f(\bw_t) \le& \langle {\nabla f(\bw_t)}, {\bw_{t+1}-\bw_t} \rangle +\frac{L_0+L_1 \Vert \nabla f(\bw_t) \Vert}{2}\Vert {\bw_{t+1}-\bw_t} \Vert^2.
	\end{align*}

If $t < \tau$, we further have $ \Vert \nabla f(\bw_t) \Vert \le G$. Therefore, if $t< \tau$, the above inequality can be further bounded by
\begin{align*}
		& f(\bw_{t+1})-f(\bw_t)
        \\
        \le& \langle {\nabla f(\bw_t)}, {\bw_{t+1}-\bw_t} \rangle +\frac{L_0+L_1 G}{2}\Vert {\bw_{t+1}-\bw_t} \Vert^2
        \\
        = & -\langle {\nabla f(\bw_t)}, \eta_t  \frac{1}{\sqrt{\bnu_t} + \varepsilon} \odot \nabla f(\bw_t) \rangle + \langle {\nabla f(\bw_t)}, \eta_t  \frac{1}{\sqrt{\bnu_t} + \varepsilon} \odot (\nabla f(\bw_t)-\bom_t) \rangle
        \\
        &+ \frac{L_0+L_1 G}{2}\eta_t^2 \left\Vert \frac{1}{\sqrt{\bnu_t} + \varepsilon} \odot \bom_t \right\Vert^2
        \\
        =& - \eta_t  \left \Vert  {\nabla f(\bw_t)} \right\Vert^2_{ \frac{1}{\sqrt{\bnu_t} + \varepsilon} } + \langle {\nabla f(\bw_t)}, \eta_t  \frac{1}{\sqrt{\bnu_t} + \varepsilon} \odot (\nabla f(\bw_t)-\bom_t) \rangle
        \\
        &+ \frac{L_0+L_1 G}{2}\eta_t^2 \left\Vert \bom_t \right\Vert^2_{\frac{1}{(\sqrt{\bnu_t} + \varepsilon)^2}}
        \\
        \overset{(\circ)}{ \le } & - \eta_t  \left \Vert  {\nabla f(\bw_t)} \right\Vert^2_{ \frac{1}{\sqrt{\bnu_t} + \varepsilon} } + \frac{1}{4}\eta_t  \left \Vert  {\nabla f(\bw_t)} \right\Vert^2_{ \frac{1}{\sqrt{\bnu_t} + \varepsilon} } + \eta_t  \left \Vert  \Delta_t \right\Vert^2_{ \frac{1}{\sqrt{\bnu_t} + \varepsilon} }
        \\
        &+ (L_0 + L_1G)\eta_t^2 \left\Vert \Delta_t \right\Vert^2_{\frac{1}{(\sqrt{\bnu_t} + \varepsilon)^2}} + (L_0 + L_1G)\eta_t^2 \left\Vert \nabla f(\bw_t) \right\Vert^2_{\frac{1}{(\sqrt{\bnu_t} + \varepsilon)^2}}
        \\
        =  &-  \frac{3}{4}\eta_t  \left \Vert  {\nabla f(\bw_t)} \right\Vert^2_{ \frac{1}{\sqrt{\bnu_t} + \varepsilon} } + \eta_t  \left \Vert  \Delta_t \right\Vert^2_{ \frac{1}{\sqrt{\bnu_t} + \varepsilon} }
        \\
        &+ (L_0 + L_1G)\eta_t^2 \left\Vert \Delta_t \right\Vert^2_{\frac{1}{(\sqrt{\bnu_t} + \varepsilon)^2}} + (L_0 + L_1G)\eta_t^2 \left\Vert \nabla f(\bw_t) \right\Vert^2_{\frac{1}{(\sqrt{\bnu_t} + \varepsilon)^2}}
\end{align*}

where $\Delta_t$ is defined as $\Delta_t = \bom_t - \nabla f(\bw_t)$ and inequality $(\circ)$ uses Young's inequality.

According to Lemma \ref{lem: bound lr}, we further have 
\begin{align*}
    & f(\bw_{t+1})-f(\bw_t)
    \\
        \le & -  \frac{3}{4}\eta_t  \left \Vert  {\nabla f(\bw_t)} \right\Vert^2_{ \frac{1}{\sqrt{\bnu_t} + \varepsilon} } + \eta_t  \left \Vert  \Delta_t \right\Vert^2_{ \frac{1}{\sqrt{\bnu_t} + \varepsilon} }
        \\
        &+ \frac{(L_0 + L_1G)\eta_t^2}{\varepsilon} \left\Vert \Delta_t \right\Vert^2_{\frac{1}{\sqrt{\bnu_t} + \varepsilon}} + \frac{(L_0 + L_1G)\eta_t^2}{\varepsilon} \left\Vert \nabla f(\bw_t) \right\Vert^2_{\frac{1}{\sqrt{\bnu_t} + \varepsilon}}.
\end{align*}

With large enough $T$, we have $\eta_t \le \frac{\varepsilon}{4(L_0+L_1 G)}$, and thus
\begin{align*}
    & f(\bw_{t+1})-f(\bw_t)
    \\
        \le & -  \frac{1}{2}\eta_t  \left \Vert  {\nabla f(\bw_t)} \right\Vert^2_{ \frac{1}{\sqrt{\bnu_t} + \varepsilon} } + 2\eta_t  \left \Vert  \Delta_t \right\Vert^2_{ \frac{1}{\sqrt{\bnu_t} + \varepsilon} }
    \\
    \le & -  \frac{1}{2(G+\sigma +\varepsilon)}\eta_t  \left \Vert  {\nabla f(\bw_t)} \right\Vert^2 + 2\frac{\eta_t}{\varepsilon}  \left \Vert  \Delta_t \right\Vert^2_{ \frac{1}{\sqrt{\bnu_t} + \varepsilon} }
    \\
     \le & -  \frac{1}{2(G+\sigma +\varepsilon)}\eta_t  \left \Vert  {\nabla f(\bw_t)} \right\Vert^2 + 2\frac{\eta_t}{\varepsilon^2}  \left \Vert  \Delta_t \right\Vert^2.
\end{align*}

After taking sum over $t$ and rearranging, we have
\begin{align}
\nonumber
	\sum_{t=1}^{\tau-1}\left( \norm{\nabla f(\bw_{t})}^2 -\frac{2(G+\sigma+\varepsilon)}{\varepsilon^2}\norm{\Delta_{t}}^2\right)\le \frac{2(G+\sigma+\varepsilon)}{\eta_t}\left(f(\bw_1)-f(\bw_\tau)\right).
	\end{align}

Multiplying both sides of the above inequality by $2$ and adding the inequality in Lemma \ref{lem:sum_moment_error}, we obtain with probability at least $1-
\delta$,
\small
\begin{align}
\label{eq: gradient sum}
	\sum_{t=1}^{\tau-1} \norm{\nabla f(\bw_{t})}^2\le & \frac{2(G+\sigma+\varepsilon)}{\eta_t}(f(\bw_1)-f(\bw_\tau)) + 4\sigma^2((1-\bone)T+1)+20R^2\sigma^2 \sqrt{2\sum_{t=2}^T\log(1/\delta)} 
    \\
    \nonumber
    = & \tilde{\mathcal{O}}(1/\sqrt{T}).
	\end{align}
    \normalsize

In the following proof, we will bound the probability of the event $\{\tau \le T\}$. Note if we can show  $\mathbb{P}(\tau > T) \ge 1-\delta$, the proof is completed, as conditional on $\{\tau> T\}$, $\sum_{t=1}^{\tau-1} \norm{\nabla f(\bw_{t})}^2$ in the above inequality will become $\sum_{t=1}^{T} \norm{\nabla f(\bw_{t})}^2$.

Obviously, the stopping time $\tau$ (eq. (\ref{eq: def stop time})) can be decomposed as $\tau:=\min\{\tau_1,\tau_2\}$, where $\tau_1$ and $\tau_2$ are two stopping times defined as 
    \begin{align*}
		\tau_1:=&\min\{t\mid f(\bw_t)-f^*>F\}\land (T+1),\\ \tau_2:=&\min\{t\mid \Vert {\nabla f(\bw_t)-\bg_t} \Vert >\sigma\}\land (T+1),
	\end{align*}
	We then bound $\mathbb{P}(\tau_1 \le T)$ and $\mathbb{P}(\tau_2 \le T)$ respectively.\\
\textbf{Bound of $\mathbb{P}(\tau_2 \le T)$.} We bound this term by a similar practice as Lemma \ref{lem:sum_moment_error}. 
According to the definition of $\tau_2$
	\begin{align*}
\Pro(\tau_2\le T)
  =& \Pro\left(\bigcup_{1\le t\le T}\left\{ \norm{\nabla f(\bw_t)-\bg_t}>\sigma \right\}\right)\\
		\le& \sum_{1\le t\le T}\Pro\left( \norm{\nabla f(\bw_t)-\bg_t}>\sigma \right)\\
		\le& 2Te^{-\frac{\sigma^2}{2R^2}}\\
		\le&\frac{\delta}{2},
	\end{align*}
	where the last inequality uses the definition of $\sigma$.

	\textbf{Bound of $\mathbb{P}(\tau_1 \le T)$.} Simple rearranging of Eq. (\ref{eq: gradient sum}) gives that, with probability $1-\frac{\delta}{2}$,
    \begin{align*}
    & \frac{2(G+\sigma+\varepsilon)}{\eta_t}(f(\bw_{\tau})-f^*)
    \\
       \le & \sum_{t=1}^{\tau-1} \norm{\nabla f(\bw_{t})}^2 + \frac{2(G+\sigma+\varepsilon)}{\eta_t}(f(\bw_{\tau})-f^*)
        \\
        \le & \frac{2(G+\sigma+\varepsilon)}{\eta_t}(f(\bw_1)-f^*) + 4\sigma^2((1-\bone)T+1)+20R^2\sigma^2 \sqrt{2\sum_{t=2}^T\log(1/\delta)}.
    \end{align*}
    Therefore, by dividing both sides of the above inequality, we obtain 
     \begin{align*}
    &f(\bw_{\tau})-f^*
        \\
        \le & (f(\bw_1)-f^*) +\frac{\eta_t}{2(G+\sigma+\varepsilon)} 4\sigma^2((1-\bone)T+1)+ \frac{\eta_t}{2(G+\sigma+\varepsilon)}20R^2\sigma^2 \sqrt{2\sum_{t=2}^T\log(1/\delta)}
        \\
        \le & \frac{G^2}{ 3(3L_0+4L_1 G)}
        \\
        = & F,
    \end{align*}
    where the last inequality uses the definition of $G$.
    
   Therefore, we have that
   \begin{align*}
       \mathbb{P}(\tau_1 \le T) \le \Pro(\text{Eq. \ref{eq: gradient sum} fails to hold}) \le \frac{\delta}{2}.
   \end{align*}
The proof is completed by $\mathbb{P}(\tau \le T) \le \mathbb{P}(\tau_1 \le T) + \mathbb{P}(\tau_2 \le T) \le \delta$.
   
\end{proof}

\end{document}